\title{Federated Data-Efficient Instruction Tuning for Large Language Models}
\author{
 \textbf{Zhen Qin\textsuperscript{1}},~~
 \textbf{Zhaomin Wu\textsuperscript{2}\footnotemark[1]},~~
 \textbf{Bingsheng He\textsuperscript{2}},~~
 \textbf{Shuiguang Deng\textsuperscript{1}\thanks{Corresponding authors.}
}
\\
\textsuperscript{1}Zhejiang University, 
\textsuperscript{2}National University of Singapore
\\
\texttt{zhenqin@zju.edu.cn}, \texttt{zhaomin@nus.edu.sg}, \texttt{hebs@comp.nus.edu.sg}, \texttt{dengsg@zju.edu.cn}\\
}
\begin{document}
\maketitle

\doparttoc
\faketableofcontents

\begin{abstract}
Instruction tuning is a crucial step in improving the responsiveness of pretrained large language models (LLMs) to human instructions.
Federated learning (FL) helps to exploit the use of vast private instruction data from clients, becoming popular for LLM tuning by improving data diversity.
Existing federated tuning simply consumes all local data, causing excessive computational overhead and overfitting to local data, while centralized data-efficient solutions are not suitable for FL due to privacy concerns. 
This work presents \app, a federated data-efficient instruction tuning approach, which tunes LLMs with a representative subset of edge-side data.
It reduces the data redundancy at both intra- and inter-client levels without sharing raw data.
Experiments with various LLMs, datasets and partitions show that \app improves Rouge-L on unseen tasks by an average of 10.72\% over the SOTA full-data federated instruction tuning methods, while using less than 1.5\% of the data samples, improving training efficiency by up to tens of times.
\end{abstract}

\section{Introduction}
\label{sec-intro}
Large language models (LLMs) exhibit remarkable performance on a wide range of natural language tasks. 
Instruction tuning \cite{supernaturalinstructions} is crucial to improve LLMs' responsiveness to human instructions, whose success hinges on the availability of diverse and high-quality data \cite{wang2024survey,wang2023self,li2024selective}, particularly for unseen tasks \cite{wei2022flan}.
As the pool of publicly available data is expected to be exhausted in the near future \cite{villalobos2022will}, exploiting more data sources, such as data on edge devices, becomes essential for continued LLM improvement \cite{qin2024full}.
However, privacy concerns and regulations such as GDPR \cite{voigt2017eu} complicate the use of edge-side data.

Federated learning (FL) \cite{mcmahan2017fl} has emerged as a promising solution to utilize diverse data from edge devices to tune LLMs \cite{zhang2024fedit}, especially for generalization to unseen tasks \cite{qin2024full,bai2024federated}.
Prior FL works on LLMs mainly focus on communication and memory costs \cite{zhang2024fedit,babakniya2023slora,zhang2023-FedPETuning,che2023FedPepTAO,kuang2024federatedscope,xu2024forward,qin2024full}.
Although these methods show promising results, our investigation reveals that they often use all local data for training \cite{qin2024full,xu2024forward,zhang2024fedit,ling2024convergence}. 
This strategy leads to many training steps, and thus causes two main issues in FL contexts:

\begin{figure}[t]
  \centering
  \begin{minipage}[t]{0.48\linewidth}
    \centering
    \includegraphics[width=\linewidth]{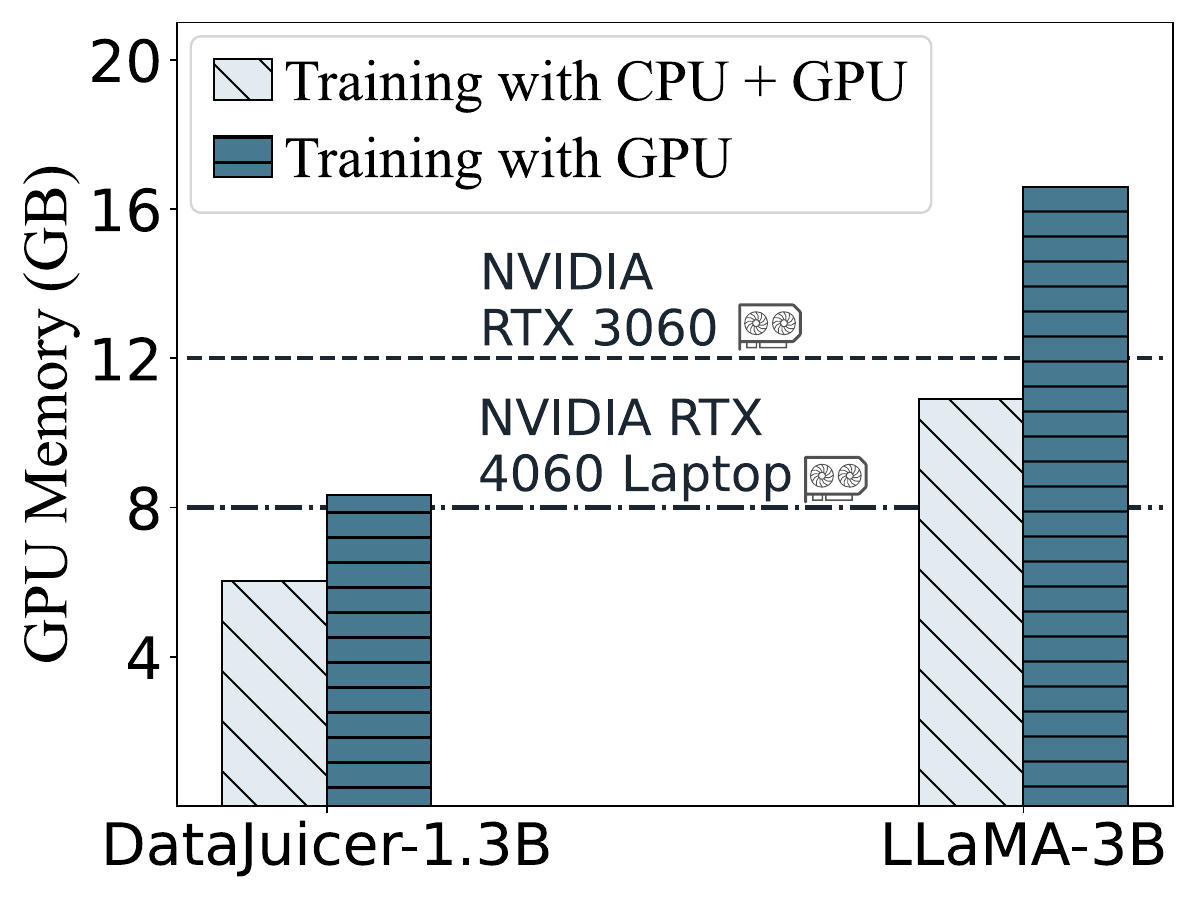}
    \caption{GPU memory cost by training with LoRA\protect\hyperref[fn1]{$^1$} v.s. capacity of popular edge-side GPUs\protect\hyperref[fn2]{$^2$}.}
    \label{fig:intro:mem}
  \end{minipage}
  \hspace{0.1cm}
  \begin{minipage}[t]{0.48\linewidth}
    \centering
    \includegraphics[width=\linewidth]{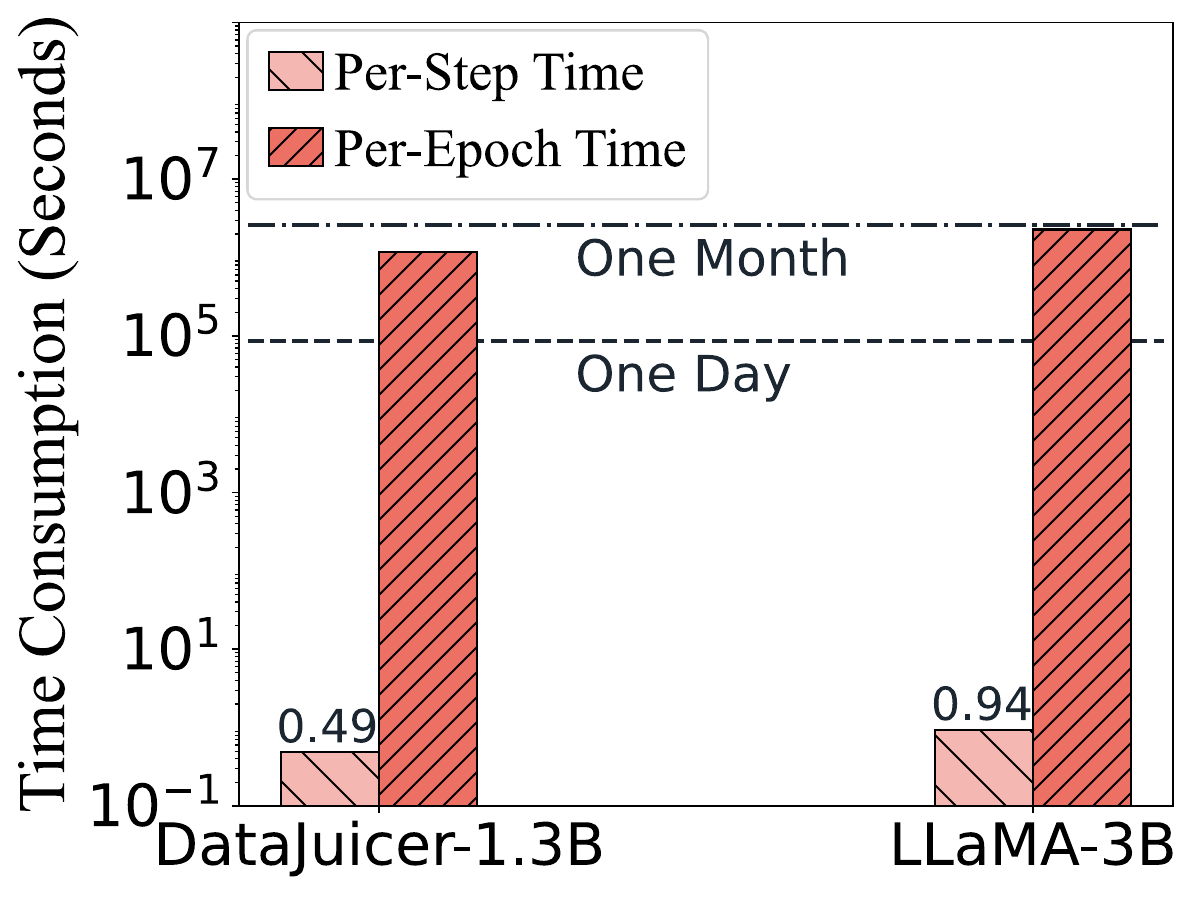}
    \caption{Per-step/epoch time of BP with LoRA on Natural Instructions dataset by CPU+GPU.
    }
    \label{fig:intro:time}
  \end{minipage}
\end{figure}
\footnotetext[1]{\label{fn1}~CPU+GPU training is implemented with Deepspeed \cite{rasley2020deepspeed} and a batch size of 1 (loaded in 16-bit).}
\footnotetext[2]{\label{fn2}~The most popular desktop and laptop GPUs are reported by Steam Hardware Survey (Dec 2024).}
(1) Efficiency: 
It is time-consuming to train LLMs, especially when edge devices have limited GPU capacity and thus require CPU+GPU hybrid computing (Figure \ref{fig:intro:mem}), where iterating through the entire dataset may incur intolerable time consumption (Figure \ref{fig:intro:time}).
(2) Generalization: 
FL clients usually hold data covering limited domains.
Training on redundant data increases the risk of overfitting, harming generalization to unseen tasks.
Thus, in FL, it is necessary to explore tuning LLMs using fewer representative data to improve efficiency and reduce overfitting, i.e., \emph{data-efficient instruction tuning} \cite{chen2023maybe,sachdeva2024howto}, which has been touched in centralized scenarios, e.g., tuning LLMs with only high-quality samples \cite{zhou2023lima,li2024Superfiltering,lu2023instag,cui2023ultrafeedback} or representative samples (a.k.a. coreset) \cite{chen2023maybe,wu2023self,cao2024instruction}.
Among them, coreset-based methods usually excel in reducing data volume.

Despite their success in centralized scenarios, applying these works to FL faces two main limitations.
\textbf{(L1) Low compatibility with FL:}
Some methods access all data simultaneously \cite{chen2023maybe,wu2023self,cao2024instruction}, violating FL's fundamental principle of restricting direct access to data by third parties, and thus failing to detect inter-client data redundancy.
\textbf{(L2) Suboptimal data representations:} 
The selection of coresets significantly impacts the tuning accuracy \cite{zha2023data,wang2024survey}.
Current methods rely on data representations from the last Transformer layer \cite{chen2023maybe,wu2023self,cao2024instruction}, which may not capture the full spectrum of features to distinguish data samples.

Due to the limited research on federated data-efficient instruction tuning and the shortcomings of centralized methods in FL scenarios, we propose \textbf{\app}, a novel framework of \texbl{fed}erated \texbl{h}ierarchical \texbl{d}ata \texbl{s}election.
It clusters local data on each client to detect \emph{intra-client} data redundancy, and then sends approximate cluster centroids to the server for further clustering to identify \emph{inter-client} data redundancy, cutting down computational costs and enhancing generalization to new tasks by taking the distribution of edge-side data without direct data access, which is compatible with FL (L1). 
Then, we propose to fuse data features from different Transformer layers, in order to provide better data representations for coreset selection (L2).

This work makes the following contributions:
\begin{itemize}[leftmargin=1em]
\item We propose \app, a framework that tunes LLMs using coreset while alleviating \emph{intra-client} and \emph{inter-client} data redundancy. 
To the best of our knowledge, this is the \emph{first} study on federated data-efficient instruction tuning for LLMs.
\item We propose a simple yet effective method to fuse features of varying abstraction levels from different Transformer layers. It works within \app to facilitate clustering-based coreset selection for federated instruction tuning.
\item We conduct experiments on two widely-used instruction datasets with various LLMs and non-IID partitions, showing that \app improves Rouge-L on unseen tasks by 10.72\% on average, using less than 1.5\% of the data samples compared to existing practical federated baselines\footnote{Our codes are available at \url{https://github.com/zhenqincn/FedHDS}.}.
\end{itemize}
\section{Related Work}
\label{sec:related}

\paragraph{Federated Learning for LLM Tuning}
Federated tuning for LLMs gains widespread attention by enabling edge-side data use without direct access.
Due to the scale of LLMs, many studies develop federated tuning based on parameter-efficient fine-tuning (PEFT) techniques to reduce memory and communication costs \cite{zhang2024fedit,babakniya2023slora,zhang2023-FedPETuning,che2023FedPepTAO}. 
Among PEFT techniques, LoRA \cite{lora} gains significant attention, with studies on initialization \cite{babakniya2023slora}, objective consistency \cite{sun2024lora}, and heterogeneous client resources \cite{bai2024federated}.
Some works focus on specific costs, such as communication by transmitting only loss values and random seeds \cite{qin2024full} and memory by employing quantization-aware training \cite{xu2024forward}.

During local training, existing works typically consume all local data \cite{qin2024full,xu2024forward,zhang2024fedit,wu2024fedbiot,kuang2024federatedscope,sun2024lora}, leading to a significant computational cost and overfitting to the local data, as client-side data in FL often consists of many samples from only a few domains.
Unlike existing methods, this work takes a data-centric perspective \cite{zha2023data}, which focuses on \emph{data efficiency} by selecting a small number of representative data samples for LLM tuning, to achieve better efficiency and generalization on unseen tasks.

\paragraph{Data-Efficient LLM Instruction Tuning}
Existing studies show that LLMs learn knowledge primarily from pretraining, and can be taught to produce better outputs with a limited amount of instruction data \cite{zhou2023lima}.
Some studies achieve comparable or better tuning results using fewer samples by focusing on data quality or the representativeness of data samples.

Quality-oriented works either filter out low-quality samples by heuristics \cite{chen2024data}, quality indicators \cite{li2024from,chen2024data} or third-party LLMs \cite{li2024Superfiltering}, or curate high-quality samples by manual efforts \cite{zhou2023lima} or third-party LLMs \cite{lu2023instag,cui2023ultrafeedback}.
Methods based on heuristics or quality indicators
can filter out a limited number of samples.
Manual or third-party LLM-based methods are unsuitable for FL due to privacy risks.

Representativeness-based works select a few representative samples by clustering algorithms, filtering out much of the data while retaining tuning accuracy (e.g., more than 95\% of the dataset) \cite{chen2023maybe,wu2023self}.
However, existing coreset selection methods have shortcomings in FL contexts:
1) the nature of FL, where data stays on the device, prevents existing methods from being fully integrated, such as detecting data redundancy across clients.
2) existing methods often extract data features for clustering using only the last layer of the Transformer \cite{chen2023maybe,wu2023self,li2024from}, which may result in a suboptimal feature space to distinguish coresets. 

\begin{figure*}[t]
  \centering
  \includegraphics[width=0.96\linewidth]{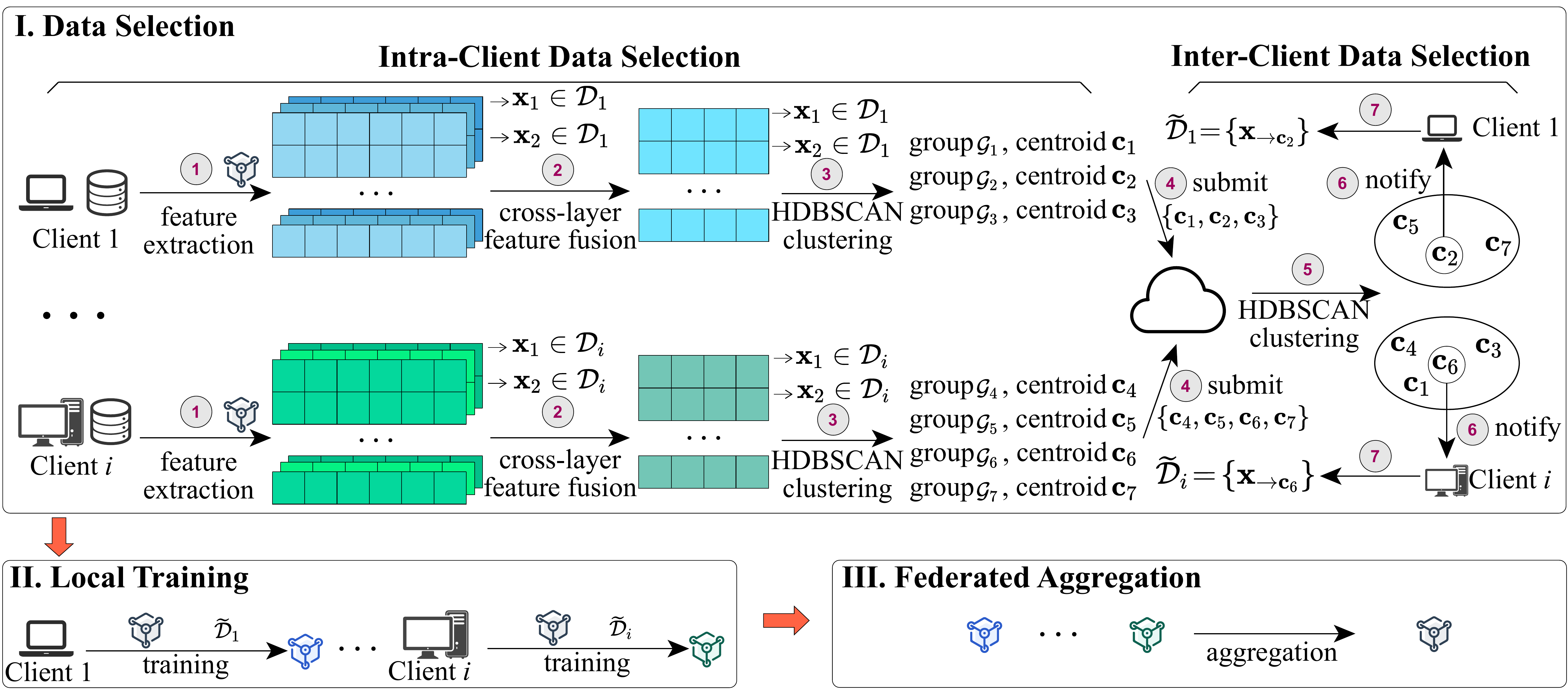}
  \caption{
    Overview of \app in each round of FL.
    It differs from vanilla FL \cite{mcmahan2017fl} by incorporating ``I. Data Selection'' to select representative data samples before ``II. Local Training''.
    Centroid $\mathbf{c}$ is derived from HDBSCAN and does not correspond to specific samples.
    Data sample $\mathbf{x}_{\rightarrow \mathbf{c}_j}$ denotes the one closest to the centroid $\mathbf{c}_j$ of corresponding group $\mathcal{G}_j$.
    Set $\widetilde{\mathcal{D}}_i$ contains representative samples in dataset $\mathcal{D}_i$ of the $i$-th client.
  }
  \label{pic-framework}
\end{figure*}

\paragraph{Summary}
\begin{table}[!h]
\caption{Qualitative comparison between related instruction tuning methods for LLMs and \app.}
\label{tab:qualitative-comparison}
\setlength\tabcolsep{2pt}
\centering
\begin{adjustbox}{max width=\linewidth}
\begin{tabular}{l|c|c|c}
\toprule[1.0pt]
 & \makecell[c]{Reduce Intra-\\Client Data\\Redundancy} & \makecell[c]{Reduce Inter-Client\\Data Redundancy\\(Privacy-Preserving)} & \makecell[c]{Full-Layer\\Feature\\Fusion} \\
\midrule[0.5pt]
\makecell[l]{Federated Instruction\\\ \ Tuning} & \xmark & \xmark & \xmark \\
\midrule[0.5pt]
\makecell[l]{Centralized Data-Efficient\\\ \ Instruction Tuning} & \cmark & \xmark & \xmark \\
\midrule[0.5pt]
\textbf{\app (ours)} & \cmark & \cmark & \cmark \\
\bottomrule[1.0pt]
\end{tabular}
\end{adjustbox}
\end{table}

As Table \ref{tab:qualitative-comparison}, existing federated tuning overlooks data redundancy.
Centralized data-efficient methods are either impractical for FL due to external data transmission or fail to address inter-client redundancy (L1).
They also rely on only the last-layer Transformer outputs to distinguish data samples, potentially causing sub-optimal data representations (L2).
Unlike these methods, \app performs instruction tuning using a coreset while alleviating both intra- and inter-client redundancy, with fused features from all Transformer layers.
\section{Problem Formulation}
\label{sec:problem-formulation}
\paragraph{Federated Instruction Tuning}
Assuming there are $N$ clients in an FL system, each client $i$ holds a private dataset $\mathcal{D}_i$ containing several instances of instruction data.
Given an LLM $\mathbf{w}\in \mathbb{R}^d$ initialized by the pretrained weight $\mathbf{w}^0$, federated instruction tuning aims at optimizing $\mathbf{w}$ with the private data held by clients towards the following objective,
\begin{equation}
  \min_{\mathbf{w}} f(\mathbf{w}) \triangleq \sum_{i=1}^N \lambda_i \cdot \mathbb{E}_{\mathbf{x} \sim \mathcal{D}_i}\left[ \mathcal{L}(\mathbf{w}; \mathbf{x})\right],
  \label{eq-fl-optimization}
\end{equation}
where $\mathcal{L}(\mathbf{w}; \mathbf{x})$ is the loss evaluated on model $\mathbf{w}$ for data instance $\mathbf{x}$ sampled from $\mathcal{D}_i$, and $\lambda_i$ is the weight of client $i$ that follows $\lambda_i > 0$ and $\sum_i^N \lambda_i=1$. 
Symbol $\mathbf{x}$ is used as the batch size is 1 to reduce memory usage \cite{qin2024full}.
To solve Eq. \eqref{eq-fl-optimization}, FL iterates multiple rounds. 
In each round $r$, several active clients get the latest model parameters $\mathbf{w}^r$ from the server and perform several steps of stochastic gradient descent (SGD), as
\begin{equation}
  \mathbf{w}^r_{i, t + 1}= \mathbf{w}^r_{i, t} - \eta \cdot \nabla_{\mathbf{w}^r_{i, t}} \mathcal{L}(\mathbf{w}^r_t; \mathbf{x}), \forall \mathbf{x} \in \mathcal{D}_i,
  \label{eq-local-sgd}
\end{equation}
where $\mathbf{w}^r_{i, t}$ is the model of client $i$ at the $t$-th local step in round $r$, and $\eta$ is the learning rate.
Typically, the process iterates over the local dataset $\mathcal{D}_i$ for one or more epochs \cite{qin2024full,xu2024forward,zhang2024fedit}.
After local training, each active client sends the updated model to the server.
To alleviate communication and memory costs, FL typically adopts PEFT techniques, where only a small subset of model parameters is trained and transmitted.
Similarly to \citet{zhang2024fedit}, this work only trains and transmits LoRA adapters.

\paragraph{Federated Data-Efficient Instruction Tuning}
Assuming each client uses a data selection function $f(\mathbf{x}; \cdot) \rightarrow \left\{0, 1\right\}$ to construct a coreset
\begin{equation}
    \widetilde{\mathcal{D}}_i = \left\{\mathbf{x} \in \mathcal{D}_i \mid f(\mathbf{x} ; \cdot) = 1\right\},
\end{equation}
and performs local training only on $\widetilde{\mathcal{D}}_i$ as Eq. \eqref{eq-local-sgd}.
If $f$ makes the model $\widetilde{\mathbf{w}}$ trained on $\{\widetilde{\mathcal{D}}_i\}_{i=1}^{N}$ achieve accuracy comparable to—or even better than—that obtained with the full datasets, while satisfying
\begin{equation}
    \sum_{i=1}^N \left| \widetilde{\mathcal{D}}_i\right| /\ \sum_{i=1}^N \left| \mathcal{D}_i\right| \ll 1,
\end{equation}
then the method is \emph{data-efficient}, which greatly reduces the computation cost than the full-data ones.

Additionally, by considering the redundancy among local coresets, we can further reduce the local iterations for each client.
Assuming an ideal distribution $\mathcal{P}^*$ that each real-world data $\mathbf{x}$ follows \cite{qin2024synergy}, and data samples in each coreset $\widetilde{\mathcal{D}}_i$ follow $\mathcal{P}_i$, where local data distributions $\{\mathcal{P}_1, \ldots, \mathcal{P}_N \}$ share mutual similarity to some extent since all the client-side data can be regarded as sampled from $\mathcal{P}^*$, so that inter-client data redundancy may exist.
Experimental results in Section \ref{subsec-exp-ablation} also illustrate this.
To alleviate intra-client and inter-client data redundancy, we design \app.
\section{Approach}
\label{sec-approach}
\subsection{Overview}
\label{subsec-app-overview}
\app identifies the representative data samples with their latent features.
As shown in Figure \ref{pic-framework}, in round $r$, after downloading the global model $\mathbf{w}^r$, each client performs intra-client selection, followed by an inter-client selection performed by the server.

In intra-client selection, client $i$ gets the hidden state in each Transformer layer with each data sample $\mathbf{x} \in \mathcal{D}_i$ as the input (\ding{192} in Figure \ref{pic-framework}).
Then, these features are fused across different Transformer layers (\ding{193} in Figure \ref{pic-framework}).
Next, clustering is performed to partition the data into several groups (\ding{194} in Figure \ref{pic-framework}).
Each group holds an approximate centroid $\mathbf{c}$ that does not correspond to an individual data point.
After that, the client sends these centroids $\{\mathbf{c}_1, \mathbf{c}_2, \ldots, \}$ to the server (\ding{195} in Figure \ref{pic-framework}).

Upon receiving the centroids from all active clients, inter-client selection starts.
The server clusters received centroids into several groups (\ding{196} in Figure \ref{pic-framework}).
In each group, the point closest to its centroid $c^{\text{II}}_j$ is designated as the chosen one.
Then, the server notifies each client regarding which of their sent centroids are selected (\ding{197} in Figure \ref{pic-framework}). 
Next, each client $i$ adds the data sample closest to each of the selected centroid within the corresponding group to coreset $\widetilde{\mathcal{D}}_i$ (\ding{198} in Figure \ref{pic-framework}).

Finally, client $i$ performs local training on $\widetilde{\mathcal{D}}_i$ as \citet{zhang2024fedit}.
The selection processes are summarized in Algorithm \ref{algo:data-selection} of Appendix \ref{sec:appendix:algo}.
In the following, we detail the design of \app.

\subsection{Intra-Client Data Selection}
\label{subsec-app-intra}
Data samples are selected based on features.
Given an LLM with $l$ Transformer layers, it can extract data features layer by layer and token by token, as
\begin{equation}
\begin{bmatrix}
 \mathbf{h}^{1, 1}_j & \mathbf{h}^{1, 2}_j & \ldots  & \mathbf{h}^{1, -1}_j \\
 \mathbf{h}^{2, 1}_j & \mathbf{h}^{2, 2}_j & \ldots  & \mathbf{h}^{2, -1}_j \\
 \ldots & \ldots & \ldots & \ldots\\
 \mathbf{h}^{l, 1}_j & \mathbf{h}^{l, 2}_j & \ldots  & \mathbf{h}^{l, -1}_j \\
\end{bmatrix},
\label{eq-transformer-feature-matrix}
\end{equation}
where $\mathbf{h}^{l, b}_j \in \mathbb{R}^v$ is the hidden states of the $b$-th token from the $l$-th Transformer layer for the $j$-th data sample, with $b$=-1 denoting the last token.
From the token level, \app uses the hidden state of the last token as \citet{li2024from} since it encapsulates all preceding token information.

\begin{figure}[t]
  \centering
  \subfigure[CH Index $\uparrow$]{
    \includegraphics[width=0.45\linewidth]{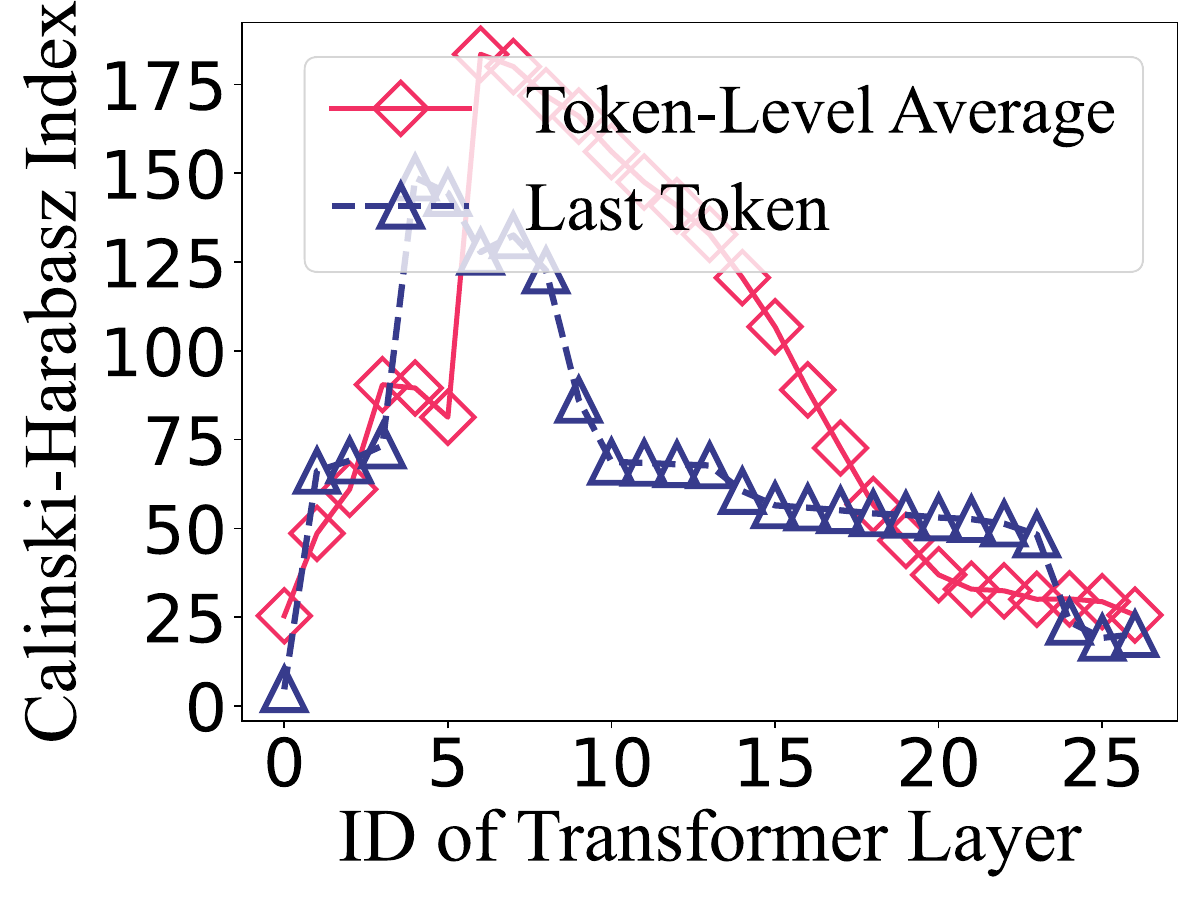}
    \label{subfig-clustering-ch}
  }
  \subfigure[F$_{1}$-Score $\uparrow$]{
    \includegraphics[width=0.45\linewidth]{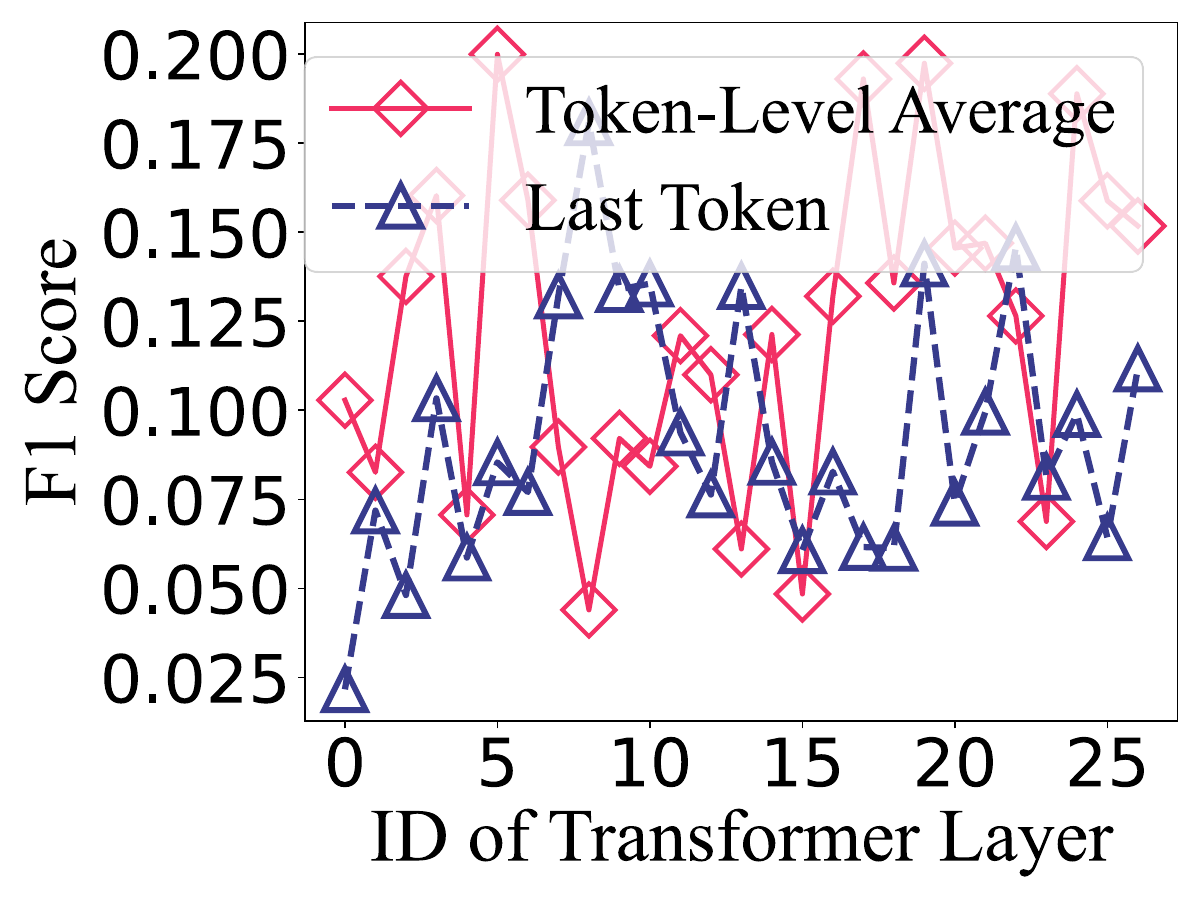}
    \label{subfig-clustering-f1}
  }
  \caption{Evaluations of clustered data groups based on features from different Transformer layers, obtained in a centralized scenario with \modelllama on \datadolly.
  }
  \label{pic-clustering-metrics-layer}
\end{figure}
\begin{figure}[t]
  \centering
  \subfigure[The last layer.]{
    \includegraphics[width=0.41\linewidth]{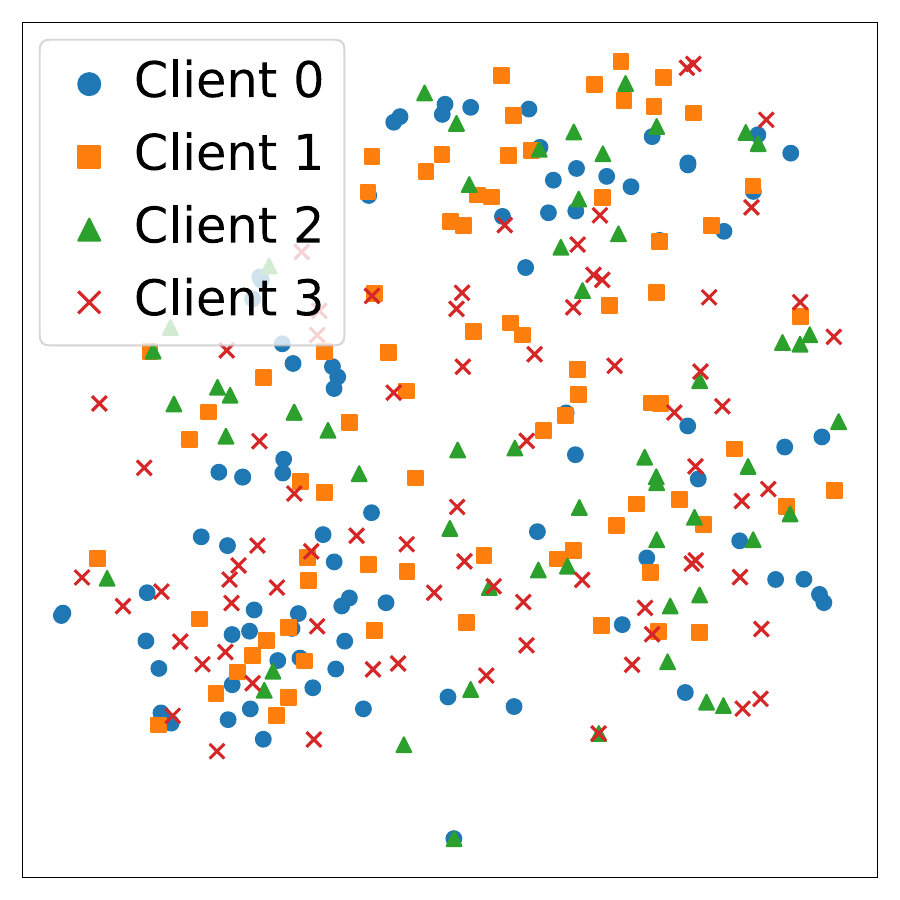}
    \label{subfig-vis-1B-dolly5.0-last}
  }
  \hspace{0.1cm}
  \subfigure[Fused from all layers.]{
    \includegraphics[width=0.41\linewidth]{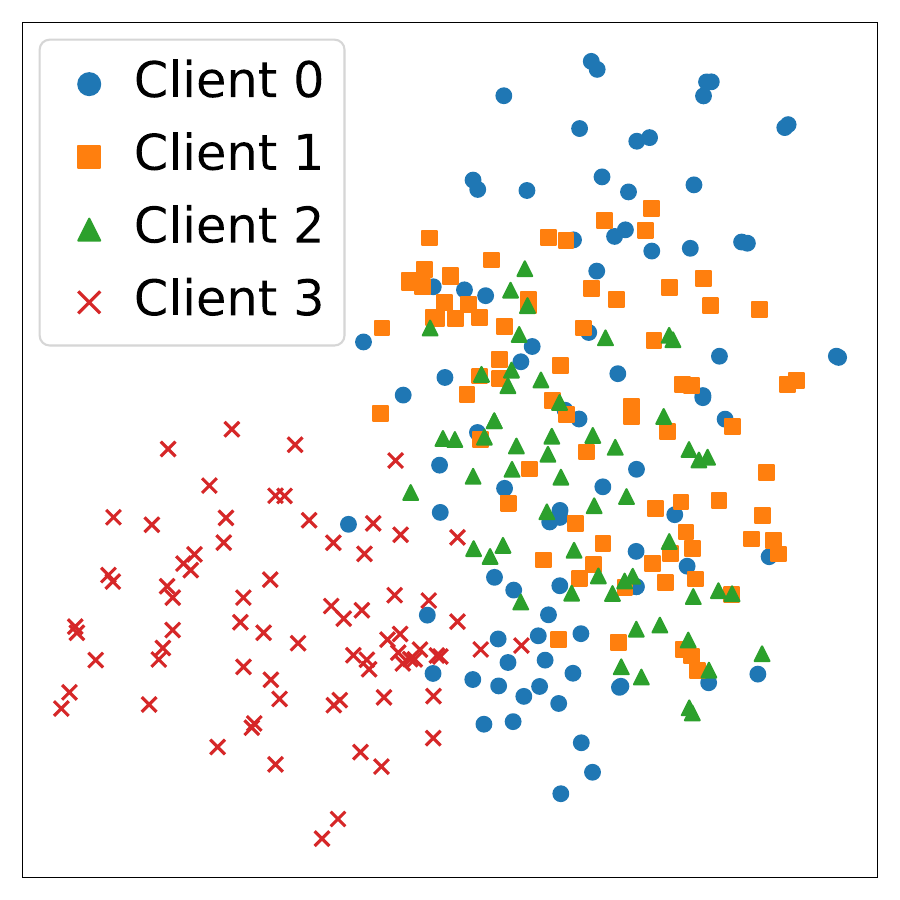}
    \label{subfig-vis-1B-dolly5.0-fusion}
  }
  \caption{Visualization of features obtained with \modeldatajuicer on \datadolly ($\alpha$=5.0).
  }
  \label{pic-visualization-feature}
\end{figure}
At the layer level, prior works often use the last layer \cite{chen2023maybe,wu2023self,li2024from}, which may not be universally optimal since different layers provide varying abstraction degrees to data representations.
We show it with a toy example on \datadolly, clustering its 8-category instructions into 8 groups using K-means.
Figure \ref{pic-clustering-metrics-layer} evaluates clustering quality with Calinski-Harabasz Index \cite{calinski1974dendrite}, and F$_1$-score (more evaluations are left in Appendix \ref{sec:appendix:additional-exp-metrics-layer}), showing that the last layer is not universally optimal, and no single layer excels across all metrics.

Predicting the optimal layer is challenging, and computing clustering metrics for all layers is costly. 
A feasible approach is fusing features from all layers.
One naive method is concatenating the last token's hidden states across layers, as
\begin{equation}
\mathbf{h}_j = \left [\mathbf{h}^{1, -1}_{j}, \mathbf{h}^{2, -1}_{j}, \ldots, \mathbf{h}^{l, -1}_{j}  \right ].
\label{eq-feature-concatenate}
\end{equation}
However, Figure \ref{pic-clustering-metrics-layer} shows that some layers degrade data separability.
Dimensionality reduction may alleviate the impact of inappropriate dimensions, e.g., low-variance dimensions have minimal impact on distance calculation in t-SNE \cite{van2008visualizing}.
Thus, we apply $P: \mathbb{R}^{l \times v} \to \mathbb{R}^k, k \ll l \times v$ to fuse the features from $l$ layers, as
\begin{equation}
\!\!\!\!
\resizebox{0.9\linewidth}{!}{$
  \{\widetilde{\mathbf{h}}_1, \widetilde{\mathbf{h}}_2, \ldots, \widetilde{\mathbf{h}}_{\left|\mathcal{D}_i\right|} \} = P(\{\mathbf{h}_1, \mathbf{h}_2, \ldots, \mathbf{h}_{\left|\mathcal{D}_i\right|} \}).
$}
\label{eq-dimension-reduction}
\end{equation}
We choose t-SNE with Barnes-Hut implementation \cite{van2013barnes}, as it is more effective in nonlinear spaces than earlier methods such as PCA \cite{jolliffe2002principal}.
The fused feature dimension $k$ is set to 2 for efficiency.
Figure \ref{pic-visualization-feature} shows the effectiveness of feature fusion, where a random subset of \datadolly is partitioned to 4 clients with label distribution skew.
If last-layer features are employed, data among clients tends to be scattered, while fused features form relatively clear boundaries between clients, enhancing sample distinction.

After obtaining fused features of local data $\mathcal{D}_i$, denoted by $\{\widetilde{\mathbf{h}}_1, \widetilde{\mathbf{h}}_2, \ldots, \widetilde{\mathbf{h}}_{\left|\mathcal{D}_i\right|} \}$, HDBSCAN \cite{campello2013density} is applied to cluster them into groups $\{\mathcal{G}_1, \mathcal{G}_2, \ldots\}$.
Each group $\mathcal{G}_j$ holds a centroid $\mathbf{c}_j$ that does not correspond to a real sample.
These centroids are sent to the server to determine which groups are selected for tuning.

\subsection{Inter-Client Data Selection}
\label{subsec-app-inter}
As discussed in Section \ref{sec:problem-formulation} and shown in Figure \ref{pic-visualization-feature}, there may be similarities among client-side data.
Thus, we cluster the approximate centroids sent from the clients to the server with HDBSCAN to filter redundant data groups among clients, as
\begin{equation}
\!\!\!\!
\resizebox{0.9\linewidth}{!}{$
  \mathbb{G}^{\text{II}} = \left\{\mathcal{G}^{\text{II}}_1, \mathcal{G}^{\text{II}}_2, \ldots\right\} = \text{HDBSCAN}(\left\{\mathbf{c}_1, \mathbf{c}_2, \ldots \right\}).
\label{eq-layer-2-clustering}
$}
\end{equation}
For each $\mathcal{G}^{\text{II}}_j$ with its approximate centroid $\mathbf{c}^{\text{II}}_j$, the server identifies the first-layer group $\mathcal{G}_s$ whose centroid $\mathbf{c}_s$ closest to $\mathbf{c}^{\text{II}}_j$ as the selected one, as
\begin{equation}
\!\!\!\!
\resizebox{0.9\linewidth}{!}{$
  \mathbb{G}^{\text{selected}} = \{\mathcal{G}_s \mid \mathbf{c}_s = \underset{\mathbf{c} \in \mathcal{G}^{\text{II}}_j}{\operatorname{arg \ min}} \| \mathbf{c} - \mathbf{c}^{\text{II}}_j \| \}^{\mathcal{G}^{\text{II}}_j \in \mathbb{G}^{\text{II}}}.
$}
\label{eq-layer-2-selection}
\end{equation}
Then, the corresponding clients will be notified of the selection of first-layer groups. 
Given the partitioned data groups $\mathbb{G}_i = \left\{\mathcal{G}_1, \mathcal{G}_2, \ldots\right\}$ of client $i$, the subset finally used for tuning $\widetilde{\mathcal{D}}_i$ is obtained as:
\begin{equation}
\!\!\!\!
\resizebox{0.9\linewidth}{!}{$
  \widetilde{\mathcal{D}}_i = \{\mathbf{x} \mid q = \underset{\mathbf{x} \in \mathcal{G}_j}{\operatorname{arg \ min}} \left\| \mathbf{x} - \mathbf{c}_j\right\| \}^{\mathcal{G}_j \in \mathbb{G}_i, \mathcal{G}_j \in \mathbb{G}^{\text{selected}}}.
$}
\label{eq-tuning-set-construction}
\end{equation}
In each selected group $\mathcal{G}_j$, the data sample closest to centroid $\mathbf{c}_j$ is added in $\widetilde{\mathcal{D}}_i$ ($\mathbf{x}_{\rightarrow \mathbf{c}_j}$ in Figure \ref{pic-framework}).

\subsection{Instruction Tuning with Coresets}
\label{subsec-app-tuning}
After determining coresets, each client $i$  performs local training using only $\widetilde{\mathcal{D}}_i$ (II in Figure \ref{pic-framework}), as
\begin{equation}
  \mathbf{w}^r_{i, t + 1}= \mathbf{w}^r_{i, t} - \eta \cdot \nabla_{\mathbf{w}^r_{i, t}} \mathcal{L}(\mathbf{w}^r_t; \mathbf{x}), \forall \mathbf{x} \in \widetilde{\mathcal{D}}_i,
  \label{eq-local-sgd-subset}
\end{equation}
followed by sending the optimized model parameters.
Then, the server performs FedAvg \cite{mcmahan2017fl} on the received parameters, as
\begin{equation}
    \mathbf{w}^{r + 1} = \frac{1}{\sum_{i \in \mathbb{M}_r} | \widetilde{\mathcal{D}}_i |} \sum_{i \in \mathbb{M}_r} | \widetilde{\mathcal{D}}_i | \cdot \mathbf{w}^{r}_{i, -1},
    \label{eq-aggregation}
\end{equation}
where $\mathbb{M}_r$ contains the indices of activate clients in the $r$-th round of FL.
Finally, the next round starts.

\app only performs downsampling on the local training data based on FedAvg, thus its convergence is theoretically supported, as discussed in Appendix \ref{sec:appendix:convergence-analysis}.
Besides, we provide discussions on the theoretical speedup of \app in Appendix \ref{sec:appendix:speed-up}.

\subsection{Further Enhancement for \app}
\label{subsec-app-enhancement}
\paragraph{Efficiency}
\label{subsubsec-app-enhancement-efficiency}
The efficiency of \app can be further improved by faster feature extraction, since it requires forward propagation across all the local data with the LLM $\mathbf{w}$. 
Motivated by studies on retrieval augmented generation which adopt a light-weight retrieval model to extract data features \cite{fan2024survey}, we use a smaller language model sharing Transformer architecture and and next-token prediction paradigm with $\mathbf{w}$ as a proxy to generate representations for each data, i.e., a small version of GPT-2 (about 124M parameters).
We term this approach as \apppro.

\paragraph{Privacy}
\label{subsubsec-app-enhancement-privacy}
Although local centroids sent to the server are two-dimensional vectors that do not correspond to real samples, \app offers more information than vanilla FL. 
We can adopt a straightforward differential privacy method for scenarios requiring stronger privacy protection \cite{hu2022membership}.
We scale elements in local centroids to [-1, 1] with \texttt{tanh} to maintain data distinguishability despite extreme values.
Gaussian noise is then added to the scaled centroids before transmission, deriving their differential privacy (Theorem \ref{thm:privacy}).
Experiments in Section \ref{subsec:exp:privacy} show that \app performs well with a reasonable noise scale.
\begin{theorem}\label{thm:privacy}
  Let $\mathbf{c} \in [-1,1]$ be the original centroid and $\mathbf{c}' = \mathbf{c} + \mathbf{z}$ be the one noised by Gaussian noise $\mathbf{z} \sim \mathcal{N}(0, \sigma^2 \mathbf{I})$ with standard deviation $\sigma$. Suppose $\varepsilon, \delta \in (0,1)$. Noised centroid $\mathbf{c}'$ satisfies $(\varepsilon, \delta)$-differential privacy if $\sigma \geq \frac{2\sqrt{2\log(1.25/\delta)}}{\varepsilon}$.
\end{theorem}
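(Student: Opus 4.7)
The result is a direct application of the classical Gaussian mechanism, so the plan is essentially to reduce the claim to that theorem and check the sensitivity bound. First I would recall the Gaussian mechanism (Dwork \& Roth, Theorem A.1): for any query $f$ mapping datasets to $\mathbb{R}^k$ with $\ell_2$-sensitivity $\Delta_2 f = \max_{D \sim D'} \|f(D) - f(D')\|_2$, releasing $f(D) + \mathbf{z}$ with $\mathbf{z} \sim \mathcal{N}(0, \sigma^2 \mathbf{I})$ satisfies $(\varepsilon, \delta)$-differential privacy whenever $\varepsilon, \delta \in (0,1)$ and $\sigma \geq \Delta_2 f \cdot \sqrt{2\ln(1.25/\delta)}/\varepsilon$. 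Thus the entire proof reduces to (i) identifying the query (the centroid release) and (ii) upper-bounding its $\ell_2$-sensitivity.

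Next I would argue the sensitivity bound. Because each coordinate of the centroid is passed through $\tanh$ before release, we have $\mathbf{c} \in [-1,1]$ coordinate-wise. For any pair of neighboring inputs, the centroid to be released can therefore differ by at most $2$ per coordinate in the worst case, so along the relevant axis one has $\|\mathbf{c}(D) - \mathbf{c}(D')\|_2 \leq 2$, i.e., $\Delta_2 \leq 2$. Substituting $\Delta_2 = 2$ into the Gaussian mechanism bound yields exactly
\[
\sigma \;\geq\; 2 \cdot \frac{\sqrt{2\ln(1.25/\delta)}}{\varepsilon},
\]
which is the stated condition, and $(\varepsilon, \delta)$-DP of $\mathbf{c}' = \mathbf{c} + \mathbf{z}$ follows.

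The step that requires the most care, and is the only place the proof is more than bookkeeping, is the sensitivity argument: one must be explicit that $\tanh$-normalization produces the $[-1,1]$ range \emph{after} the potentially unbounded feature fusion, so that the crude bound $\|\mathbf{c}(D) - \mathbf{c}(D')\|_2 \leq 2$ is actually attained by the post-processed quantity and not by some intermediate representation. Once this is granted (and tanh is $1$-Lipschitz and therefore data-processing-friendly for DP), the rest is a single substitution into a standard theorem and no further calculation is needed.
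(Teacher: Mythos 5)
Your proof is correct and follows exactly the paper's route: the paper likewise invokes the Gaussian mechanism of Dwork and Roth with sensitivity $2$ coming from the $\tanh$-induced range $[-1,1]$, and substitutes into the standard bound $\sigma \geq \Delta_2 \sqrt{2\ln(1.25/\delta)}/\varepsilon$. Your additional remark that the $[-1,1]$ bound must apply to the released (post-$\tanh$) quantity is a useful explicitness the paper's one-line proof omits, but it is the same argument.
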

\begin{proof}
  The proof follows directly by applying the Gaussian mechanism~\cite{dwork2014algorithmic}, where the sensitivity (i.e., range of $\mathbf{c}$) is 2.
\end{proof}
\section{Evaluations}
\label{sec-exp}

\subsection{Experimental Setup}
\label{subsec:exp-setup}
\paragraph{Baselines}
We introduce six federated tuning methods using full data as baselines:
1) FedAvg that tunes and transmits the full LLM, included for reference due to high costs;
2\&3) FedPTuning and FedPrompt \cite{kuang2024federatedscope} that apply PEFT techniques of P-Tuning \cite{liu2023gpt} and Prompt Tuning \cite{lester2021power} based on FedAvg, respectively, trained with Adam \cite{adam} optimizer; 
4\&5) FedIT \cite{zhang2024fedit}: instruction tuning based on FedAvg with LoRA, optimized with Adam or SGD (FedIT-SGD);
6) FlexLoRA \cite{bai2024federated} that supports LoRA adapters with varying ranks based on FedIT.

Given the lack of data-efficient works in FL, we develop two federated methods and a centralized one using coresets:
1) Random: It randomly selects a ratio of local data. Although being native, it is a strong baseline \cite{lin2024data,sachdeva2024howto} by preserving the original data distributions;
2) Perplexity that selects data with lower perplexity scores \cite{chen2024data};
and
3) Coreset-Cent \cite{chen2023maybe} that selects a fixed ratio of data by K-means clustering on the last-layer features.

\paragraph{Datasets and Evaluations}
Following \citet{qin2024full,kuang2024federatedscope}, we conduct experiments on \datani \cite{supernaturalinstructions} (NI) and \datadolly \cite{DatabricksBlog2023DollyV2}, and employ Rouge-L on held-out tasks as the evaluation metrics.
After preprocessing (detailed in Appendix \ref{subsec:appendix:reproducibility-data-preprocessing}), NI contains 738 training tasks, each of which is assigned to a unique client, providing non-IIDness with feature skew, and the natively provided 119 test tasks are used for evaluation. 
\datadolly contains 8 tasks.
The last one is used for evaluation, and the rest are partitioned to 200 clients via Dirichlet distribution with $\alpha$ set to 0.5 and 5.0, respectively.
Experiments on these two datasets provide scenarios where the client has hundreds and dozens of data samples, respectively.

\paragraph{Implementation}
\label{subsec-exp-implementation}
This work targets cross-device FL, thus, 5\% of the clients are randomly selected to participate in each round.
Limited by space, the implementation is detailed in Appendix \ref{sec:appendix:reproducibility}.

\subsection{Comparison on Accuracy}
\label{subsec:exp-main}
\begin{table*}[t]
\renewcommand\arraystretch{0.92}
\caption{Rouge-L (\%) comparisons. 
Parentheses indicate the ratio of consumed data samples compared to full-data methods.
Each value is the average Rouge-L obtained in the last round of four runs with different random seeds. 
\hl{Coreset-Cent} and \hl{FedAvg} are introduced just as references as they are not practical to end devices.
Bold and underlined numbers are the best and second-best values among approaches practical to cross-device FL, respectively.
}
\label{tab-performance}
\setlength\tabcolsep{3.3pt}
\centering
\begin{adjustbox}{max width=\textwidth}
\begin{tabular}{l|cc|cc|cc}
\toprule[1.0pt]
\multirow{2}{*}{Approach}     & \multicolumn{2}{c|}{\textbf{\datani} (Meta Non-IID)}              & \multicolumn{2}{c|}{\textbf{\datadolly ($\alpha=0.5$)}} & \multicolumn{2}{c}{\textbf{\datadolly ($\alpha=5.0$)}} \\
\cmidrule{2-7}  
                              & \modeldatajuicer                & \modelllama                     & \modeldatajuicer                & \modelllama                      & \modeldatajuicer                & \modelllama \\
\midrule[1.0pt]
\rowcolor{gray!20}Coreset-Cent    & 31.36 \std{0.80} \pdata{1.00} & 34.81 \std{0.90} \pdata{0.01}                 & 33.27 \std{0.33} \pdata{0.50} & 35.48 \std{1.08} \pdata{1.00}               & 33.27 \std{0.33} \pdata{0.50} & 35.48 \std{1.08} \pdata{1.00}   \\
\rowcolor{gray!20}FedAvg & 22.08 \std{1.52} \pdata{100} & 27.88 \std{0.75} \pdata{100}               & 32.30 \std{1.23} \pdata{100} & 34.27 \std{0.45} \pdata{100}                   & 33.38 \std{1.43} \pdata{100} & 33.95 \std{0.79} \pdata{100} \\
\midrule[1.0pt]
FedPTuning                    & 19.61 \std{2.71} \pdata{100}    & 25.41 \std{1.14} \pdata{100}               & 23.98 \std{3.23} \pdata{100} & 30.30 \std{1.16} \pdata{100}                   & 25.33 \std{2.48} \pdata{100} & 29.08 \std{1.33} \pdata{100} \\
FedPrompt                     & \ \ 6.04 \std{0.12} \pdata{100} & \ \ 8.95 \std{2.47} \pdata{100}            & 32.73 \std{0.87} \pdata{100} & 24.50 \std{4.78} \pdata{100}       & 32.51 \std{1.31} \pdata{100} & 23.94 \std{4.15} \pdata{100} \\
FedIT-SGD                     & 19.40 \std{1.83} \pdata{100}    & 28.14 \std{0.85} \pdata{100}               & 27.23 \std{0.68} \pdata{100} & 29.28 \std{0.50} \pdata{100}                        & 27.28 \std{1.35} \pdata{100}  & 29.19 \std{0.89} \pdata{100} \\
FlexLoRA                & 23.19 \std{2.14} \pdata{100} & 28.86 \std{0.55} \pdata{100} & 29.81 \std{1.06} \pdata{100} & 32.84 \std{0.99} \pdata{100} & 29.17 \std{1.35} \pdata{100} & 32.18 \std{1.28} \pdata{100} \\
FedIT                         & 22.30 \std{0.42} \pdata{100} & 28.13 \std{0.50} \pdata{100}    & 30.80 \std{0.98} \pdata{100} & 33.23 \std{1.51} \pdata{100}            & 30.97 \std{0.43} \pdata{100} & 33.68 \std{1.07} \pdata{100} \\
\midrule[0.3pt]
Random                        & \texl{26.20 \std{1.71}} \pdata{0.20}   & 31.23 \std{1.37} \pdata{2.00}              & 32.59 \std{0.10} \pdata{1.50}  & 33.82 \std{0.82} \pdata{1.50}        & 32.24 \std{0.43} \pdata{1.50} & 34.29 \std{0.85} \pdata{5.00} \\
Perplexity                    & 24.45 \std{0.77} \pdata{5.00}   & 30.49 \std{0.21} \pdata{5.00}              & 32.73 \std{0.15} \pdata{1.50}       & 33.71 \std{0.51} \pdata{1.50}        & 32.24 \std{0.22} \pdata{1.50} & 33.88 \std{0.34} \pdata{5.00} \\
\midrule[1.0pt]
\app                         & \texb{26.64 \std{0.79}} \pdata{0.20} & \texl{32.32 \std{0.92}} \pdata{0.15}                   & \textbf{33.38 \std{0.40}} \pdata{0.82} & \textbf{35.40 \std{0.78}} \pdata{1.18}                             & \textbf{33.70 \std{0.19}} \pdata{0.88} & \textbf{35.79 \std{0.43}} \pdata{1.34}  \\
\apppro                      & 25.93 \std{0.75} \pdata{0.23} & \textbf{32.93 \std{0.64}} \pdata{0.22}           & \texl{33.28 \std{0.44}} \pdata{1.31} & \texl{35.01 \std{0.65}} \pdata{1.26}         & \texl{33.52 \std{0.20}} \pdata{1.19} &  \texl{35.42 \std{0.29}} \pdata{1.28} \\
\bottomrule[1.0pt]
\end{tabular}
\end{adjustbox}
\end{table*}
\newcommand{\gpu}{\footnotesize{BP w/ GPU}}
\newcommand{\hybirdbp}{\footnotesize{BP w/ CPU+GPU}}
\newcommand{\pbar}[1]{
  \begin{tikzpicture}
    \draw[fill=blue!50] (0,0) rectangle (#1*0.5,0.2);
    \draw (0,0) rectangle (0.5,0.2);
  \end{tikzpicture}
}

\begin{table*}[t]
\renewcommand\arraystretch{0.92}
\caption{
Comparisons on 1) client-side time (CTime) i.e., the time a client spends performing local computations during a round of FL, 2) the overall time consumption across all rounds (total time), and 3) the speedup ratio. Time is calculated with training using CPU+GPU due to limited GPU memory on edge devices.
}
\label{tab-speedup}
\setlength\tabcolsep{1.8pt}
\centering
\begin{adjustbox}{max width=\textwidth}
\begin{tabular}{l|r@{\hspace{0.2cm}}rr|r@{\hspace{0.2cm}}rr|r@{\hspace{0.2cm}}rr|r@{\hspace{0.2cm}}rr}
\toprule[1.0pt]
\multicolumn{1}{c|}{\multirow{3}{*}{Approach}} & \multicolumn{3}{c|}{\textbf{\modeldatajuicer on NI}} & \multicolumn{3}{c|}{\textbf{\modelllama on NI}}             & \multicolumn{3}{c|}{\textbf{\modeldatajuicer on \datadolly}} & \multicolumn{3}{c}{\textbf{\modelllama on \datadolly}} \\
\cmidrule{2-13}  
                         & \multicolumn{1}{c}{\small CTime} & \multicolumn{1}{c}{\small Total Time} & \multicolumn{1}{c|}{\small Speedup} & \multicolumn{1}{c}{\small CTime} & \multicolumn{1}{c}{\small Total Time} & \multicolumn{1}{c|}{\small Speedup} & \multicolumn{1}{c}{\small CTime} & \multicolumn{1}{c}{\small Total Time} & \multicolumn{1}{c|}{\small Speedup} & \multicolumn{1}{c}{\small CTime} & \multicolumn{1}{c}{\small Total Time} & \multicolumn{1}{c}{\small Speedup} \\
\midrule[1.0pt]
FedIT      & 489S & 8D9H\pbar{1.0} & 1$\times$      & 811S & 13D21H\pbar{1.0} & 1$\times$& 56.0S & 9H20M\pbar{1.0} & 1$\times$ & 114S & 19H6M\pbar{1.0} & 1$\times$\\
Random     & 13.2S & 5H24M\pbar{0.0269} & 37.2$\times$& 35.3S & 14H29M\pbar{0.043} & 23.0$\times$& 6.5S & 43M25S\pbar{0.078} & 12.89$\times$ & 25.2S & 4H12M\pbar{0.22} & 4.54$\times$\\
Perplexity & 46.6S & 19H9M\pbar{0.0953} & 10.5$\times$& 85S & 1D11H\pbar{0.105} & 9.51$\times$& 8.9S & 59M27S\pbar{0.106} & 9.42$\times$ & 27.5S & 4H34M\pbar{0.239} & 4.18$\times$\\
\app       & 27.5S & 11H17M\pbar{0.0561} & 17.8$\times$& 40.7S & 16H42M\pbar{0.05} & 19.9$\times$& 5.2S & 34M52S\pbar{0.062} & 16.06$\times$& 14.6S & 2H25M\pbar{0.127} & 7.90$\times$\\
\apppro    & 10.0S & 4H7M\pbar{0.0205} & 48.8$\times$& 20.1S & 8H15M\pbar{0.025} & 40.4$\times$& 4.5S & 29M42S\pbar{0.053} & 18.86$\times$& 17.2S & 2H52M\pbar{0.15} & 6.66$\times$\\
\bottomrule[1.0pt]
\end{tabular}
\end{adjustbox}
\end{table*}
We compare these methods in Table \ref{tab-performance}.
The results of full-data FL methods are derived from \citet{qin2024full} under the same settings, and those of others are obtained within the best hyperparameters.

\paragraph{Comparison to Full-Data Methods}
From Table \ref{tab-performance}, \app and \apppro outperform full-data FL baselines across the six scenarios with consumed data samples less than 1.5\% of them.
Particularly, on NI with \modeldatajuicer, \app and \apppro relatively improve Rouge-L over FedIT—the practical full-data FL baseline achieving the best average accuracy—by 19.5\% and 16.3\%, respectively.
Averaged across the six scenarios, \apppro improves the Rouge-L score relative to FedIT by 10.72\%.
Besides, compared to FedIT, \app achieves an average improvement of 4.26\% and 4.96\% in Rouge-L on \datadolly when $\alpha$=0.5 and $\alpha$=5.0, respectively, indicating that \app performs better when client-side data has certain similarity.
In a cross-device FL scenario with a large scale of clients, it is common for different clients to have similar data distributions. 
These results demonstrate the effectiveness of our approaches for improving generalization.

\paragraph{Comparison to Coreset Baselines}
Both \app and \apppro outperform Random and Perplexity in 5 of the 6 scenarios. 
Only on NI with \modeldatajuicer, Random slightly outperforms \apppro.
From \citet{lin2024data,sachdeva2024howto}, Random is a strong baseline as it preserves the data distribution. 
However, it is affected by data ratios \cite{cao2024instruction}, and determining the optimal data ratio requires extensive experimentation.
Differently, \app and \apppro can automatically determine an appropriate data ratio.
Note that in 7 out of the 12 scenarios involving Random and Perplexity, the data ratios with the best Rouge-L are inspired by our approaches.
Even so, our approaches outperform them in the vast majority of cases, highlighting the need for a well-designed data selection strategy.

The centralized method, Coreset-Cent, surpasses our approaches on the complex dataset, NI, indicating room for further improvement in FL methods.

\begin{table*}[t]
\caption{
Rouge-L (\%) comparisons for ablation studies, organized in the same manner of Table \ref{tab-performance}.
}
\label{tab-ablation}
\setlength\tabcolsep{3.2pt}
\centering
\begin{adjustbox}{max width=\textwidth}
\begin{tabular}{l|cc|cc|cc}
\toprule[1.0pt]
\multirow{2}{*}{Approach}     & \multicolumn{2}{c|}{\textbf{\datani} (Meta Non-IID)}              & \multicolumn{2}{c|}{\textbf{\datadolly ($\alpha=0.5$)}} & \multicolumn{2}{c}{\textbf{\datadolly ($\alpha=5.0$)}} \\
\cmidrule{2-7}  
                              & \modeldatajuicer                & \modelllama                     & \modeldatajuicer                & \modelllama                      & \modeldatajuicer                & \modelllama \\
\midrule[1.0pt]
FedHDS$\ddagger$                        & 25.45 \std{0.64} \pdata{1.62}&  28.77 \std{1.91} \pdata{3.86}                    & 31.82 \std{0.56} \pdata{3.71} & 33.66 \std{0.48} \pdata{4.25}                    & 32.12 \std{1.66} \pdata{3.64} & 33.81 \std{0.87} \pdata{4.32} \\
FedHDS$\dagger$                 & 24.48 \std{0.78} \pdata{0.83} & 32.27 \std{0.12} \pdata{1.51}                  & 32.52 \std{0.65} \pdata{2.93} & 34.11 \std{0.94} \pdata{3.97}                        & 32.79 \std{1.15} \pdata{2.99} & 33.98 \std{1.36} \pdata{4.33} \\
\midrule[0.5pt]
\app                          & \texl{26.64 \std{0.79}} \pdata{0.20} & 32.32 \std{0.92} \pdata{0.15}                   & \textbf{33.38 \std{0.40}} \pdata{0.82} & \textbf{35.40 \std{0.78}} \pdata{1.18}                             & \textbf{33.70 \std{0.19}} \pdata{0.88} & \textbf{35.79 \std{0.43}} \pdata{1.34}  \\
\ \ \ \ w/i PCA   & \texb{27.84 \std{0.99}} \pdata{0.09} & \texl{32.62 \std{0.62}} \pdata{0.08}       & 33.12 \std{0.10} \pdata{0.74} & 34.90 \std{0.52} \pdata{1.86}           & 33.06 \std{0.23} \pdata{0.74} & 34.60 \std{1.08} \pdata{2.12}    \\
\ \ \ \ w/i KPCA  & 26.28 \std{3.93}  \pdata{0.10}& 28.95 \std{0.92} \pdata{0.08}  & 33.21 \std{0.77} \pdata{0.72} & 34.80 \std{0.95} \pdata{1.62}         & 33.34 \std{0.36} \pdata{0.75} & 35.11 \std{0.27} \pdata{2.08} \\
\midrule[0.5pt]
\apppro                     & 25.93 \std{0.75} \pdata{0.23} & \textbf{32.93 \std{0.64}} \pdata{0.22}           & \texl{33.28 \std{0.44}} \pdata{1.31} & 35.01 \std{0.65} \pdata{1.26}         & \texl{33.52 \std{0.20}} \pdata{1.19} &  \texl{35.42 \std{0.29}} \pdata{1.28} \\
\ \ \ \ w/i PCA  & 22.93 \std{2.96} \pdata{0.21} & 29.99 \std{0.78} \pdata{0.21}          & 33.03 \std{0.26} \pdata{1.40} & \texl{35.35 \std{0.65}} \pdata{1.87}           & 33.15 \std{0.21} \pdata{0.66} & 35.21 \std{0.54} \pdata{1.72} \\
\ \ \ \ w/i KPCA & 25.80 \std{2.78} \pdata{0.03} & 28.98 \std{2.33} \pdata{0.20}          & 32.86 \std{1.08} \pdata{1.86}   & 35.04 \std{0.73} \pdata{1.69}            & 33.40 \std{0.41} \pdata{0.71} & 34.88 \std{1.25} \pdata{1.92} \\
\bottomrule[1.0pt]
\end{tabular}
\end{adjustbox}
\end{table*}
\subsection{Comparison on Time Efficiency}
\label{subsec-exp-efficiency}
From Table \ref{tab-speedup}, by reducing the number of local training steps, coreset methods improve efficiency over FedIT.
\apppro achieves a higher speedup due to the fewer data samples required for better accuracy compared to other coreset FL baselines.
Particularly, on NI with \modeldatajuicer, \apppro achieves a speedup of 48.8$\times$ over FedIT.
Since \apppro is comparable to \app in accuracy, it may be more applicable.
We provide a breakdown of time consumption in Appendix \ref{subsec:appendix:exp-time-breakdown} to clearly present the time consumption of each step in our approach.

\subsection{Ablation Studies}
\label{subsec-exp-ablation}
\paragraph{Two-Layer Selection}
To clarify the contributions of intra- and inter-client selections, we build two methods:
1) FedHDS$\ddagger$: Removes inter-client selection and groups data by last-layer features.
2) FedHDS$\dagger$: Uses only intra-client selection, selecting one data sample closest to the centroid in each local group.
From Table \ref{tab-ablation}, intra- and inter-client selections vary in effectiveness across different scenarios.
\app outperforms FedHDS$\ddagger$ in all scenarios, showing the efficacy of the two selections.

To emphasize the necessity of hierarchical selection, we construct \emph{GlobalSelect}, which sends all fused features to the server for global clustering, and \emph{GlobalSelect-Turbo} that extracts features with GPT-2.
From Figure \ref{fig:ablation:global-two-layer}, global selection may cause poor accuracy, potentially because clustering on the entire large-scale dataset yields suboptimal results.
Thus, it is necessary to select data hierarchically.
\begin{figure}[t]
\centering
\subfigure[\datani]{
\includegraphics[width=0.46\linewidth]{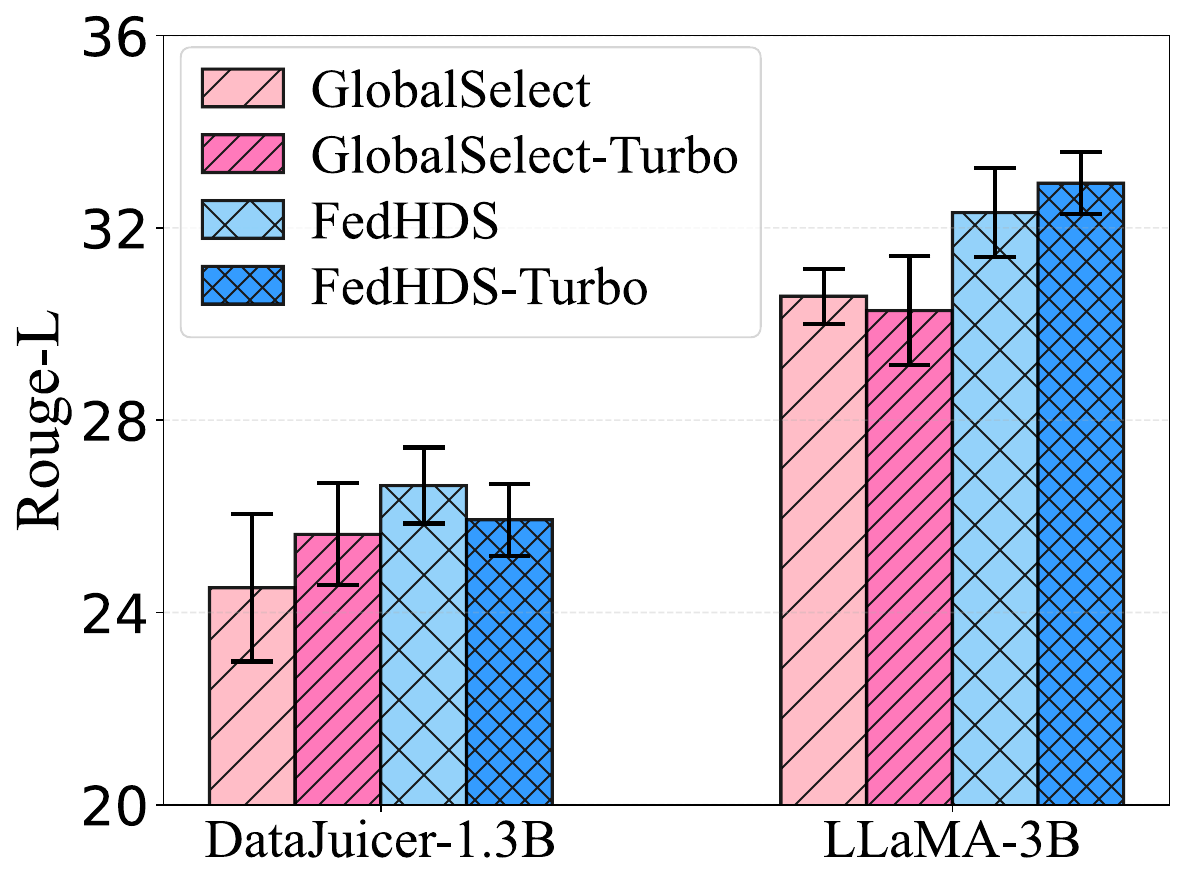}
}
\subfigure[\datadolly ($\alpha=0.5$)]{
\includegraphics[width=0.46\linewidth]{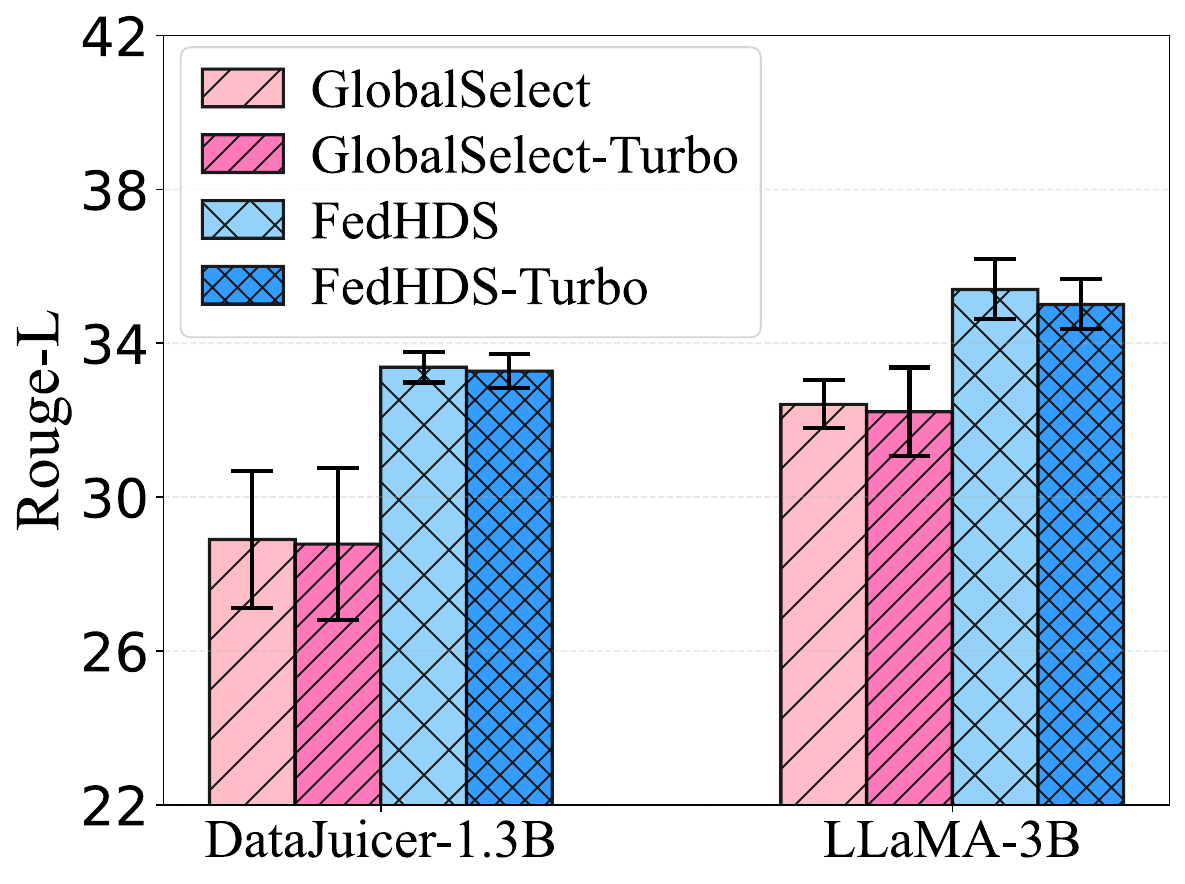}
}
\caption{
Performance of our approaches and methods that send data features to the server for global selection.
}
\label{fig:ablation:global-two-layer}
\end{figure}

\paragraph{Feature Fusion}
We explore the impact of the dimensionality reduction algorithm by replacing t-SNE with PCA \cite{jolliffe2002principal} and Kernel PCA \cite{scholkopf1997kernel}. 
From Table \ref{tab-ablation}, on the relatively simpler datasets (\datadolly), replacing t-SNE brings minimal differences. 
However, on NI, substituting t-SNE occasionally results in negative effects.
Thus, in practical scenarios, we recommend t-SNE for more effective coreset selection.

\subsection{Communication and Memory Costs}
\begin{table}[t]
\centering
\setlength\tabcolsep{3.6pt}
\renewcommand\arraystretch{0.88}
\caption{Per-round costs by 1.3B models (\datadolly).}
\label{tab:communication-memory}
\begin{adjustbox}{max width=\linewidth}
\begin{tabular}{l|c|c|c}
\toprule[1.0pt]
& \textbf{\makecell[c]{Comm.\\(Model)} } & \textbf{\makecell[c]{Comm. (Features\\\& Cluster Indices)}} & \textbf{GPU Mem.} \\
\midrule[1.0pt]
FedIT & 12 MB & 0 & 10.56 GB \\
FedHDS & 12 MB & 44 Bytes & ~9.40 GB \\
FedHDS-Turbo & 12 MB & 76 Bytes & ~9.32 GB \\
\bottomrule[1.0pt]
\end{tabular}
\end{adjustbox}
\end{table}
We provide the maximum memory cost and client-side per-round communication cost of our approaches.
Besides transmitting LoRA adapters as FedIT, i.e., Comm. (Model), \app additionally transmits 1) 2-dimensional cluster centroids and 2) indices of a few selected clusters.
From Table \ref{tab:communication-memory}, these bring negligible cost, i.e., just a few dozen bytes. 
Detailed calculations are in Appendix \ref{sec:appendix:calculation-communication}.

\app's memory usage is similar to FedIT, with a slight reduction from filtering long samples.
Nevertheless, \app makes computation offloading feasible for on-device LLM tuning by significantly reducing the required data samples, while training on the full dataset with enabling offloading incurs a substantial time cost (Figure \ref{fig:intro:time}).

\subsection{Studies on Convergence and Overfitting}
\label{subsec-exp-convergence}
\begin{figure*}[t]
    \centering
    \begin{minipage}[t]{0.56\linewidth}
        \centering
        \vspace{0pt}
        \includegraphics[width=\linewidth]{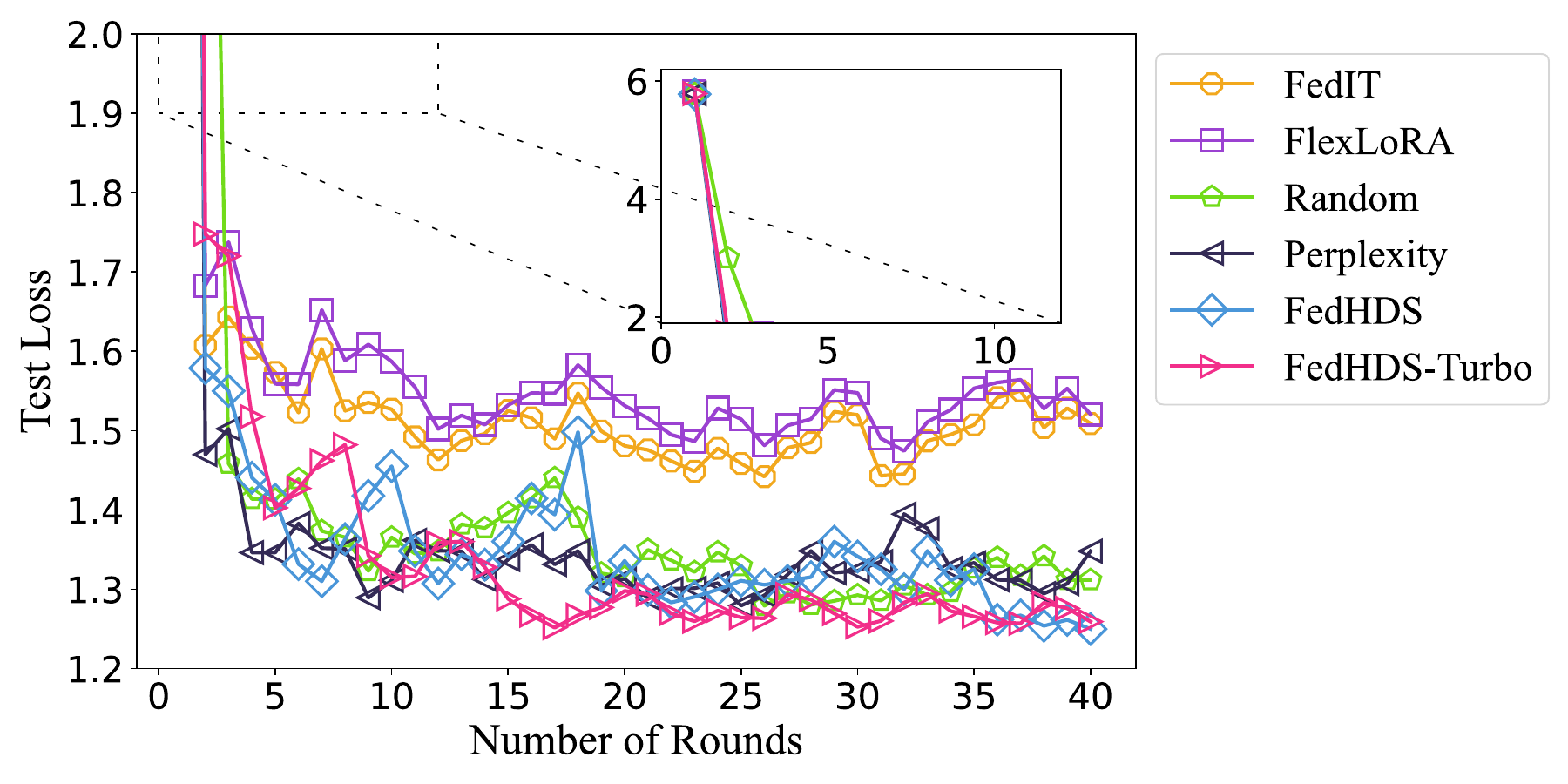}
        \caption{Convergence of the loss value on the test tasks obtained by \apppro and FedIT with \modelllama on NI.}
        \label{fig:convergence-3B-Instruct}
    \end{minipage}
    \hspace{0.3cm}
    \begin{minipage}[t]{0.338\linewidth}
        \centering
        \vspace{0pt}
        \includegraphics[width=\linewidth]{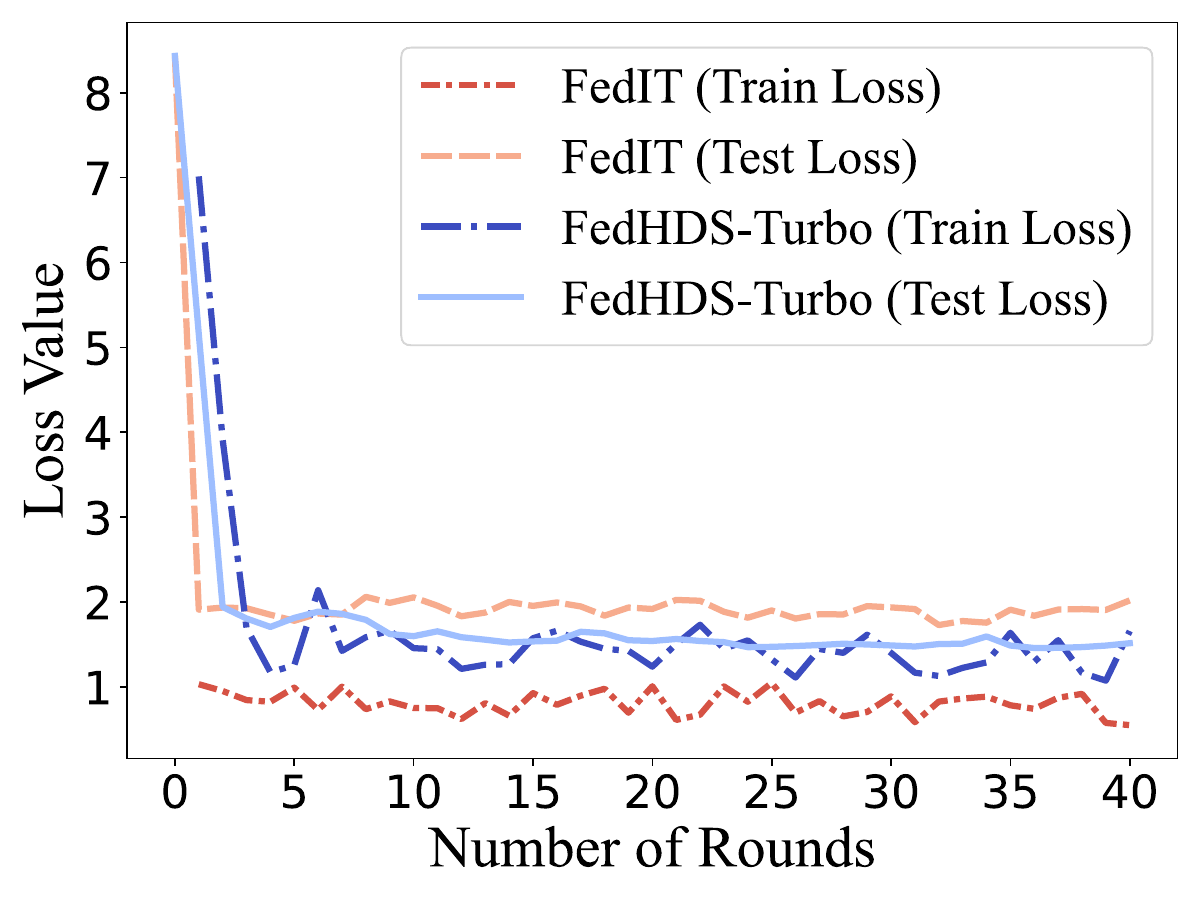}
        \caption{Convergence of training and test loss obtained by \apppro and FedIT with \modeldatajuicer on NI.}
        \label{fig:overfit-1B-Instruct}
    \end{minipage}
\end{figure*}
To illustrate the convergence trends of these approaches, Figure \ref{fig:convergence-3B-Instruct} presents the convergence curves of \app, \apppro, and the baseline methods using \modelllama on \datani.
The involved hyperparameters are aligned with those described in Section \ref{subsec:exp-setup}.
It can be seen that these methods have nearly reached a convergent state by the 40th federated round, demonstrating that LLMs can be effectively tuned with a limited number of instruction data.
Compared to approaches that use the full dataset, those relying on a data subset generally achieve a lower test loss. This is because tuning on the complete local dataset results in excessive local training steps, causing the LLM to overfit the local data and perform worse on unseen tasks, as discussed in Section \ref{subsec:exp-main}.

To better demonstrate that training on a subset can alleviate the overfitting problem to some extent, we provide the training and test loss of \apppro with \modeldatajuicer on \datani in Figure \ref{fig:overfit-1B-Instruct}, together with those of the federated approach using the full dataset that achieves the best average accuracy, i.e., FedIT.
As shown in Figure \ref{fig:overfit-1B-Instruct}, both the training and test loss of \apppro stably decrease. 
In contrast, the test loss of FedIT quickly stops decreasing while the training loss is still reducing to lower values. 
Finally, the test loss of FedHDS-Turbo is lower than that of FedIT, causing FedHDS-Turbo to perform better than FedIT, as shown in Table \ref{tab-performance}.

From the above, we can conclude that \app and \apppro exhibit stable convergence and perform on par with federated approaches that use the full dataset. 
Additionally, by reducing the amount of data used for instruction tuning, they mitigate overfitting to local data to some extent, leading to higher instruction-tuning accuracy compared to the baseline methods.

\subsection{Performance with Differential Privacy}
\label{subsec:exp:privacy}
\app can perform well with noise sampling variance no greater than 0.1, which improves the privacy of client-side data.
Limited by space, detailed experiments on this are left in Appendix \ref{subsec:appendix:exp-privacy}.

\subsection{Performance in Various FL Scenarios}
\label{subsec:exp:performance-scenrios}
\app outperforms coreset FL baselines with different active client ratios, showing its applicability in various FL scenarios.
We demonstrate this with experimental results in Appendix \ref{subsec:appendix:exp-performance-scenrios}.
\section{Conclusion}
\label{sec-conclusion}
Existing federated instruction tuning methods for LLMs typically train LLMs using all local data, causing significant computation cost and overfitting to local data. 
This work pioneers an exploration into federated data-efficient instruction tuning, and proposes \app, a coreset selection approach that solves both intra- and inter-client data redundancy.
It fuses data features of varying abstraction levels obtained from different Transformer layers for better data representation.
Extensive experiments involving various datasets, LLMs and non-IIDness demonstrate that \app enhances the data efficiency and Rouge-L on unseen tasks over existing federated tuning methods.

\section{Limitations}
\label{sec:limitation}
Although our approach improves data efficiency and generalization to unseen tasks in FL for LLM fine-tuning, it still has certain limitations.
For example, it only selects data based on representativeness but overlooks data quality.
Since domain divisions are usually implicit, low-quality data samples may also be treated as a separate domain. 
In this case, \app may select low-quality data. 
Therefore, incorporating quality-based filtering mechanisms may help further improve our approach.

\bibliography{references}

\appendix

\clearpage
\addcontentsline{toc}{section}{Appendix}
\part{\large{Appendix}}
We provide more discussions and experiments of this work and organize them as follows:

\parttoc
\section{Detailed Algorithm}
\label{sec:appendix:algo}
To facilitate a better understanding of each step of the proposed approach, we provide Algorithm \ref{algo:data-selection} to explain how \app selects the coresets for activate clients in each round $r$, where lines \ref{line:loop-intra}\textasciitilde\ref{line:submit-centroids} and \ref{line:loop-coreset}\textasciitilde\ref{line:ret} are performed individually by each client, and lines \ref{line:initialize-centroids} and \ref{line:call-inter}\textasciitilde\ref{line:send-selected} are performed by the server.
\begin{algorithm*}
\SetAlgoLined
\DontPrintSemicolon
\SetNoFillComment
\SetKwFunction{FClientTraining}{ClientTraining}
\SetKwFunction{FIntraSelection}{IntraClientSelection}
\SetKwFunction{FInterSelection}{InterClientSelection}
\SetKwProg{Fn}{Function}{:}{end} 
\setlength{\abovedisplayskip}{3pt}
\setlength{\belowdisplayskip}{3pt}
\setlength{\abovedisplayshortskip}{3pt}
\setlength{\belowdisplayshortskip}{3pt}
\caption{Processes of Data Selection in \textbf{\app} in each round $r$ of FL.}
\label{algo:data-selection}
\KwIn{$\mathbb{M}_r$, $\left\{\mathcal{D}_1, \dots, \mathcal{D}_{\left|\mathbb{M}_r \right|}\right\}$.}
\KwOut{The selected coreset $\widetilde{\mathcal{D}}_i$ for each client $i$ active in this round, denoted as $\left\{\widetilde{\mathcal{D}}_1, \dots, \widetilde{\mathcal{D}}_{\left|\mathbb{M}_r \right|}\right\}$.}
\vspace{0.2cm}
Initialize a list $\mathbb{C}$ to stage the received centroids \label{line:initialize-centroids}\\
\For{$i = 1, 2, \ldots, \left|\mathbb{M}_r\right|$}{
 \label{line:loop-intra}
 $\left\{\mathbf{c}_1, \mathbf{c}_2, \ldots \right\}$ = \FIntraSelection{$\mathbf{w}$, $\mathcal{D}_i$} \label{line:call-intra}
 \\
 $\mathbb{C} = \mathbb{C} \cup \left\{\mathbf{c}_1, \mathbf{c}_2, \ldots \right\}$ \label{line:submit-centroids} \tcp*{\ding{195} in Figure \ref{pic-framework}}
}
$\mathbb{G}^{\text{selected}}$ = \FInterSelection{$\mathbb{C}$} \label{line:call-inter}\\ 
send indices of selected groups to corresponding clients \label{line:send-selected}\tcp*{\ding{197} in Figure \ref{pic-framework}}
\For{$i = 1, 2, \ldots, \left|\mathbb{M}_r\right|$}{
 \label{line:loop-coreset}
 $\widetilde{\mathcal{D}}_i = \left\{\mathbf{x} \mid q = \underset{\mathbf{x} \in \mathcal{G}_j}{\operatorname{arg \ min}} \left\| \mathbf{x} - \mathbf{c}_j\right\| \right\}^{\mathcal{G}_j \in \mathbb{G}_i, \mathcal{G}_j \in \mathbb{G}^{\text{selected}}}$  \label{line:inloop-coreset} \tcp*{\ding{198} in Figure \ref{pic-framework}}
}
\KwRet{
\label{line:ret}
the selected subset $\widetilde{\mathcal{D}}_i$ for each client $i \in \mathbb{M}_r$.}

\vspace{0.3cm}
\Fn{\FIntraSelection{$\mathbf{w}$, $\mathcal{D}$}}{
\For{$j = 1, 2, \ldots, |\mathcal{D}|$}{
  extract features of $\mathbf{x}_j$ by each layer of $\mathbf{w}$, denoted by $\mathbf{h}_j = \left [\mathbf{h}^{1, -1}_{j}, \mathbf{h}^{2, -1}_{j}, \ldots, \mathbf{h}^{l, -1}_{j}  \right ]$  \tcp*{\ding{192} in Figure \ref{pic-framework}}
}
reduce dimensionality of $\mathbb{H} = \left\{\mathbf{h}_1, \mathbf{h}_2, \ldots, \mathbf{h}_{|\mathcal{D}|}\right\}$ as $\left\{\widetilde{\mathbf{h}}_1, \widetilde{\mathbf{h}}_2, \ldots, \widetilde{\mathbf{h}}_{\left|\mathcal{D}_i\right|} \right\} = P(\left\{\mathbf{h}_1, \mathbf{h}_2, \ldots, \mathbf{h}_{\left|\mathcal{D}_i\right|} \right\})$, obtaining the fused features  \tcp*{\ding{193} in Figure \ref{pic-framework}}
cluster data in $\mathcal{D}_i$ based on their fused features, as $\left\{\mathcal{G}_1, \mathcal{G}_2, \ldots \right\} = \operatorname{HDBSCAN}(\left\{\widetilde{\mathbf{h}}_1, \widetilde{\mathbf{h}}_2, \ldots, \widetilde{\mathbf{h}}_{\left|\mathcal{D}_i\right|} \right\})$, where each $\mathcal{G}_j$ corresponds to an approximate centroid $\mathbf{c}_j$  \tcp*{\ding{194} in Figure \ref{pic-framework}}
\KwRet{$\left\{\mathbf{c}_1, \mathbf{c}_2, \ldots \right\}$}
}

\vspace{0.3cm}
\Fn{\FInterSelection{$\left\{\mathbf{c}_1, \mathbf{c}_2, \ldots \right\}$}}{
perform HDBSCAN to partition received $\left\{\mathbf{c}_1, \mathbf{c}_2, \ldots \right\}$ into several groups $\left\{\mathcal{G}^{\text{II}}_1, \mathcal{G}^{\text{II}}_2, \ldots\right\}$  \tcp*{\ding{196} in Figure \ref{pic-framework}}
initialize a list $\mathbb{G}^{\text{selected}}$ \\
\For{$j = 1, 2, \ldots, \left|\left\{\mathcal{G}^{\text{II}}_1, \mathcal{G}^{\text{II}}_2, \ldots\right\}\right|$}{
   $s = \underset{\mathbf{c} \in \mathcal{G}^{\text{II}}_j}{\operatorname{arg \ min}} \left \| \mathbf{c} - \mathbf{c}^{\text{II}}_j \right \| $ \\
   add $\mathcal{G}_s$ into $\mathbb{G}^{\text{selected}}$ \\
}
}
\KwRet{$\mathbb{G}^{\text{selected}}$}
\end{algorithm*}
\section{Convergence Analysis}
\label{sec:appendix:convergence-analysis}
\app shares the same global objective with FedAvg (FedIT) defined in Eq. \eqref{eq-fl-optimization}.
The essential difference between \app and FedAvg (FedIT) is that it optimizes only on a subset of the original data, as illustrated by the difference between Eq. \eqref{eq-local-sgd-subset} and Eq. \eqref{eq-local-sgd}.
Given the global objective of \app defined in Eq. \eqref{eq-fl-optimization}, we have the local objective of each client $i$ during local training as:
\begin{equation}
    \min_{\mathbf{w}} f_i(\mathbf{w}) \triangleq \frac{1}{\left|\widetilde{\mathcal{D}}_i\right|} \sum_{i=1}^{\left|\widetilde{\mathcal{D}}_i\right|} \mathbb{E}_{\mathbf{x} \sim \widetilde{\mathcal{D}}_i}\left[ \mathcal{L}(\mathbf{w}; \mathbf{x})\right].
\end{equation}
Following \citet{li2020convergence}, we first make the following assumptions:
\begin{assumption}
    The local objective of each client $i$ is $L$-smooth, i.e., $f_i(\mathbf{v}) \leq f_i (\mathbf{w}) + (\mathbf{v} - \mathbf{w})^T \nabla f_i(\mathbf{w}) + \frac{L}{2}\left\|\mathbf{v} - \mathbf{w} \right \|_2^2$, $\forall \mathbf{v} \in \mathbb{R}^d$, $\forall\mathbf{w} \in \mathbb{R}^d$.
    \label{assumtion:l-smooth}
\end{assumption}
\begin{assumption}
    The local objective of each client $i$ is $\mu$-convex, i.e., $f_i(\mathbf{v}) \geq f_i (\mathbf{w}) + (\mathbf{v} - \mathbf{w})^T \nabla f_i(\mathbf{w}) + \frac{\mu}{2}\left\|\mathbf{v} - \mathbf{w} \right \|_2^2$, $\forall \mathbf{v} \in \mathbb{R}^d$, $\forall\mathbf{w} \in \mathbb{R}^d$.
    \label{assumtion:mu-convex}
\end{assumption}
\begin{assumption}
    The variance of the stochastic gradients across each client is bounded, i.e., $\mathbb{E} \left\|\nabla f_i(\mathbf{w}_{i,\tau}, \mathbf{x})  - \nabla f_i(\mathbf{w}_{i,\tau})\right\|\leq \sigma_i^2$, where $\mathbf{w}_{i,\tau}$ denotes the model parameters of the $i$-th client after $\tau$ steps of updates.
    \label{assumtion:gradient-device-bound}
\end{assumption}
\begin{assumption}
    The expected squared norm of the stochastic gradients stays within a uniform bound, i.e., $\mathbb{E} \left\|\nabla f_i(\mathbf{w}_{i,\tau}, \mathbf{x}) \right\|^2 \leq G^2$.
    \label{assumtion:gradient-uniform-bound}
\end{assumption}
Existing works also make similar assumptions on the convergence analysis of federated tuning of LLMs \cite{ling2024convergence}. 
We also mildly assume that in each round, there are $K$ clients on average that will submit their tuned models to the server, and each of them performs $E$ steps of local training to update their local models.
\begin{theorem}
    Let Assumptions \ref{assumtion:l-smooth}\textasciitilde\ref{assumtion:gradient-uniform-bound} hold, $\mathbf{w}^*$ be the optimal global model, $\kappa=\frac{L}{\mu}$, $\gamma = \max\left\{8\kappa, E\right\}$, $B = \sum_{i=1}^{N} \lambda_i^2 \sigma_i^2 + 6LT + 8(E-1)^2G^2$, $C = \frac{4}{K} E^2G^2$, and $\mathbf{E} = \mathbb{E}\left[f(\mathbf{w}_T)\right] - f(\mathbf{w}^*)$, after $T$ iterations, we have
    \begin{equation}
         \mathbf{E} \leq \frac{2\kappa}{\gamma + T}\left(\frac{B+C}{\mu} + 2L \left\| \mathbf{w}^0 - \mathbf{w}^*\right\|^2\right).
    \end{equation}
    \label{theorem:convergence:fedhds}
\end{theorem}

\begin{proof}
    In fact, \app is based on FedAvg and performs downsampling on local data, thereby affecting the number of local training iterations. 
    With appropriate variable substitution, the convergence of FedHDS can be derived from the proof process in the work of \citet{li2020convergence}.
\end{proof}
Based on Theorem \ref{theorem:convergence:fedhds}, \app has a convergence guarantee. 
Compared to FedIT, which directly adopts the training processes of FedAvg, the convergence rate of \app on the training set theoretically has certain disadvantages, which have been experimentally demonstrated in Figure \ref{fig:convergence-3B-Instruct}. 
However, the advantage of \app is its ability to handle overfitting to local data.
As shown in both Figures \ref{fig:convergence-3B-Instruct} and \ref{fig:overfit-1B-Instruct}, \app and \apppro significantly outperform FedIT in terms of test loss. 
Therefore, \app and \apppro achieve better Rouge-L on held-out tasks than FedIT, as presented in Table \ref{tab-performance}. 

\section{Analysis on Speedup Ratio}
\label{sec:appendix:speed-up}
In this section, we provide a brief analysis of the speedup achievable by our approaches to better understand their effectiveness in acceleration based on numerical results in Table \ref{tab-speedup}.

For a client with $M$ data samples, the time complexity of performing t-SNE with Barnes-Hut implementation is $\mathcal{O}(M\log M)$, and that of performing HDBSCAN is $\mathcal{O}(M\log M) \sim \mathcal{O}(M^2)$ based on data sparsity (we adopt the worst complexity). 
Assume that $\nu$ and $\epsilon$ are the scale constants between the actual time consumption and complexity of t-SNE and HDBSCAN, respectively, and $\xi$ and $\Xi$ denote the time consumption (e.g., seconds) of performing one-step inference and training with one data sample, respectively.
Assuming that the proportion of training data that \app can filter out is $\varsigma$, the time consumption of a client by conducting LLM instruction tuning with \app is 
\begin{equation}
    \nu\cdot M\log M + \epsilon\cdot M^2 + \xi \cdot M + \Xi(1-\varsigma)M,
\end{equation}
while that of FedIT, the approach using full data, is $\Xi\cdot M$.
Therefore, the speeding-up ratio can be formalized as:

\begin{equation}
    \frac{\Xi\cdot M}{\nu\cdot M\log M + \epsilon\cdot M^2 + \xi \cdot M + \Xi(1-\varsigma)M}.
\end{equation}

For these notations:

\begin{itemize}
    \item Generally, $\Xi > \xi$ by several times. 
    \item From the experiments on Dolly-15K, t-SNE on 200 samples takes 0.5 seconds, and HDBSCAN takes only 0.003 seconds. Based on these statistics, $\nu$ could be in the order of $10^{-4}$, and $\epsilon$ could be in the order of $10^{-8}$. Therefore, $\xi \gg \nu$ and $\xi \gg \epsilon$.
    \item $\varsigma$ can exceed 99\% on \datani.
\end{itemize}

Therefore, the speedup achieved by our approaches is significant, as experimentally demonstrated in Table \ref{tab-speedup}.
Besides, the inference can be significantly accelerated (by reducing $\xi$), leading to a considerable improvement, e.g., \apppro is significantly faster than \app.
\section{Reproducibility}
\label{sec:appendix:reproducibility}

\subsection{Experimental Environments for Accuracy Evaluation}
\label{subsec:appendix:reproducibility-exp-env}
We implement these approaches mentioned above with PyTorch \cite{paszke2019pytorch} \texttt{2.0.1}, Transformers \cite{wolf-etal-2020-transformers} \texttt{4.31.0}, scikit-learn \texttt{1.5.1}, PEFT \cite{peft} \texttt{0.4.0}, and hdbscan \texttt{0.8.37}.
Numerical experiments in Tables \ref{tab-performance} and \ref{tab-ablation} are performed on platforms equipped with NVIDIA A100 or NVIDIA A6000 GPUs, installed with Python \texttt{3.10} and CUDA \texttt{12.4}.
Efficiency results in Table \ref{tab-speedup} are obtained on a platform with an NVIDIA A6000 GPU, installed with Python \texttt{3.10}, CUDA \texttt{12.4} and DeepSpeed \texttt{0.15.2}.

\subsection{Experimental Environments for Memory Footprint and Efficiency Statistics}
\label{subsec:appendix:mem-time}
The memory footprint and time consumption in Figures \ref{fig:intro:mem} and \ref{fig:intro:time} are measured with a maximum token list length of 1,024 where excessively long data will be truncated.
The adopted platform is equipped with an NVIDIA A6000 GPU, installed with Python \texttt{3.10}, CUDA \texttt{12.4} and DeepSpeed \texttt{0.15.2}.
For the memory footprint, the 95th percentile is calculated.
The selected two GPUs are based on the most popular desktop and laptop GPUs identified from the Steam Hardware Survey (Dec 2024).

\subsection{Detailed Hyperparameters}
\label{subsec:appendix:reproducibility-hyperparameters}
In approaches involving the LoRA adapter, i.e., \app, \apppro, FedIT and FlexLoRA, the adapters are configured with the same hyperparameter settings, i.e., \texttt{rank}, \texttt{alpha} and \texttt{dropout} of LoRA adapters are set to 8, 16 and 0.05 for \app and \apppro, respectively.
Note that although FlexLoRA supports heterogeneous-rank LoRA adapters, we adopt a homogeneous-rank setting to ensure a fair comparison.
Coreset-based methods perform 60 rounds of FL on \datadolly with \modelllama, and 40 rounds for other scenarios, where the local training is performed on the coreset for one epoch with Adam optimizer.
The learning rate and number of rounds for federated approaches using full data are aligned with those in the work of \citet{qin2024full}.

We perform a preliminary hyperparameter search for federated approaches using coresets, and adopt the advantageous settings for each approach in each scenario for large-scale experiments.
Specifically, we first search the learning rate in $\left\{3\times 10^{-4}, 1\times 10^{-4}, 3\times 10^{-5}\right\}$.
Then, for Random and Perplexity, we search the ratio of data samples in the final selected subsets to the full data samples in $\left\{0.2\%, 1.5\%, 2\%, 5\%\right\}$.
Note that the thresholds of 0.2\% and 1.5\% are inspired by the proportion automatically obtained by \app and \apppro. 
Considering the importance of an appropriate data ratio \cite{cao2024instruction}, these two baselines have benefited to some extent from the data proportion provided by \app and \apppro.
The finally adopted values for Tables \ref{tab-performance} and \ref{tab-ablation} as as follows:
All the federated approaches with coresets adopt a learning rate $\eta$ of $3\times 10^{-5}$ on \datadolly.
On \datani, \app and \apppro adopt $\eta=3\times 10^{-4}$, Random adopt $\eta=3\times 10^{-4}$ with \modeldatajuicer and $\eta=3\times 10^{-5}$ with \modelllama, Perplexity adopt $\eta=3\times 10^{-5}$ with \modeldatajuicer and $\eta=3\times 10^{-4}$ with \modelllama.

For Random and Perplexity, we searched for the optimal data ratio within $\left\{0.2\%, 1.5\%, 2\%, 5\%\right\}$, where 0.2\% and 1.5\% are inspired by those automatically obtained by \app and \apppro.
Our approaches adopt learning rate $\eta=3 \times 10^{-5}$ on \datadolly and $\eta=3 \times 10^{-4}$ on NI.
\app and \apppro apply HDBSCAN separately in both intra-client and inter-client selection.
During intra-client selection, the minimum cluster of HDBSCAN is set to the default value, i.e., 5 for \datani and 2 for \datadolly, considering the relatively small scale of \datadolly.
During inter-client selection, the minimum cluster of HDBSCAN is uniformly set to 2.

\subsection{Detailed Descriptions on Datasets}
\label{subsec:appendix:reproducibility-data-preprocessing}
This work adopts the same data preprocessing as reported in \cite{qin2024full}.
Following \citet{zhang2024fedit,qin2024full}, we adopt the prompt template from Alpaca \cite{alpaca}.

\datani includes a large collection of tasks with natural language instructions.
We adopt the dataset versioned by \texttt{v2.8} and the \texttt{default} split, which includes 756 tasks for training and 119 tasks for testing, each with a distinct task definition. 
Considering the scale of the dataset, experiments on it are conducted on a randomly sampled subset, containing 20\% of the data instances for each training task and 2\% for each test task. 
After the subsampling, each training task with at least 20 data instances is assigned to a unique client. 
After the above preprocessing, a federated scenario with 738 clients is formed, where the test tasks remain on the server for the held-out evaluation of the tuned LLMs.

\datadolly contains 15,015 data samples corresponding to 8 tasks, with the 1,188 data samples from the last task used for testing. 
The data from the remaining seven tasks is distributed to 200 clients for training, with each task labeled according to its respective task and distributed according to a Dirichlet distribution.
We partition data samples to clients via Dirichlet distribution based on the \texttt{category} attribute of each data sample.

After the last FL round is finished, Rouge-L scores are evaluated using the tuned LLM after the final FL round.
These scores are calculated based on the data samples in evaluation tasks, with the responses as the ground-truth labels.
\section{Additional Experiments}
\label{sec:appendix:additional-exp}

\subsection{Evaluation of Features from Different Transformer Layers}
\label{sec:appendix:additional-exp-metrics-layer}
\begin{figure}[t]
  \centering
  \subfigure[CH Index (1.3B) $\uparrow$]{
    \includegraphics[width=0.42\linewidth]{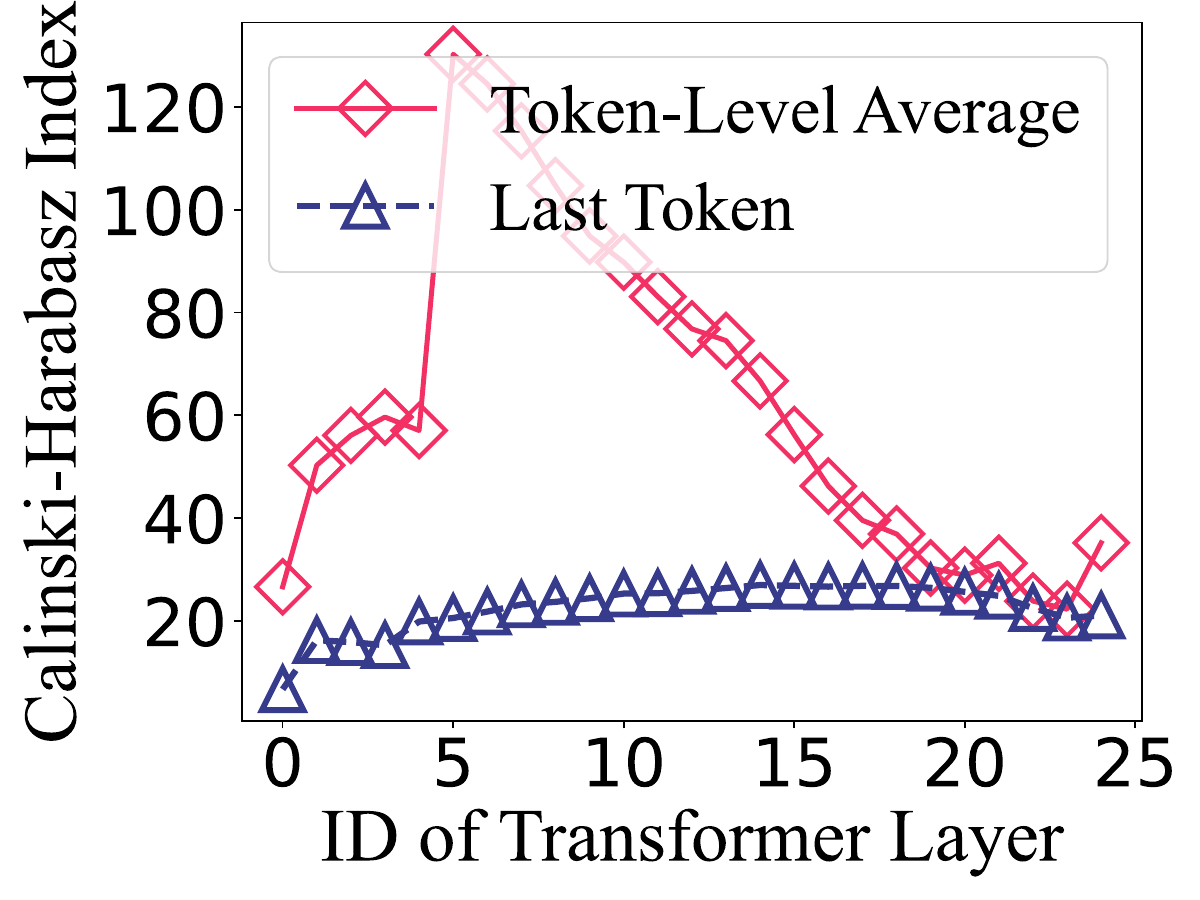}
  }
  \subfigure[DB Score (1.3B) $\downarrow$]{
    \includegraphics[width=0.42\linewidth]{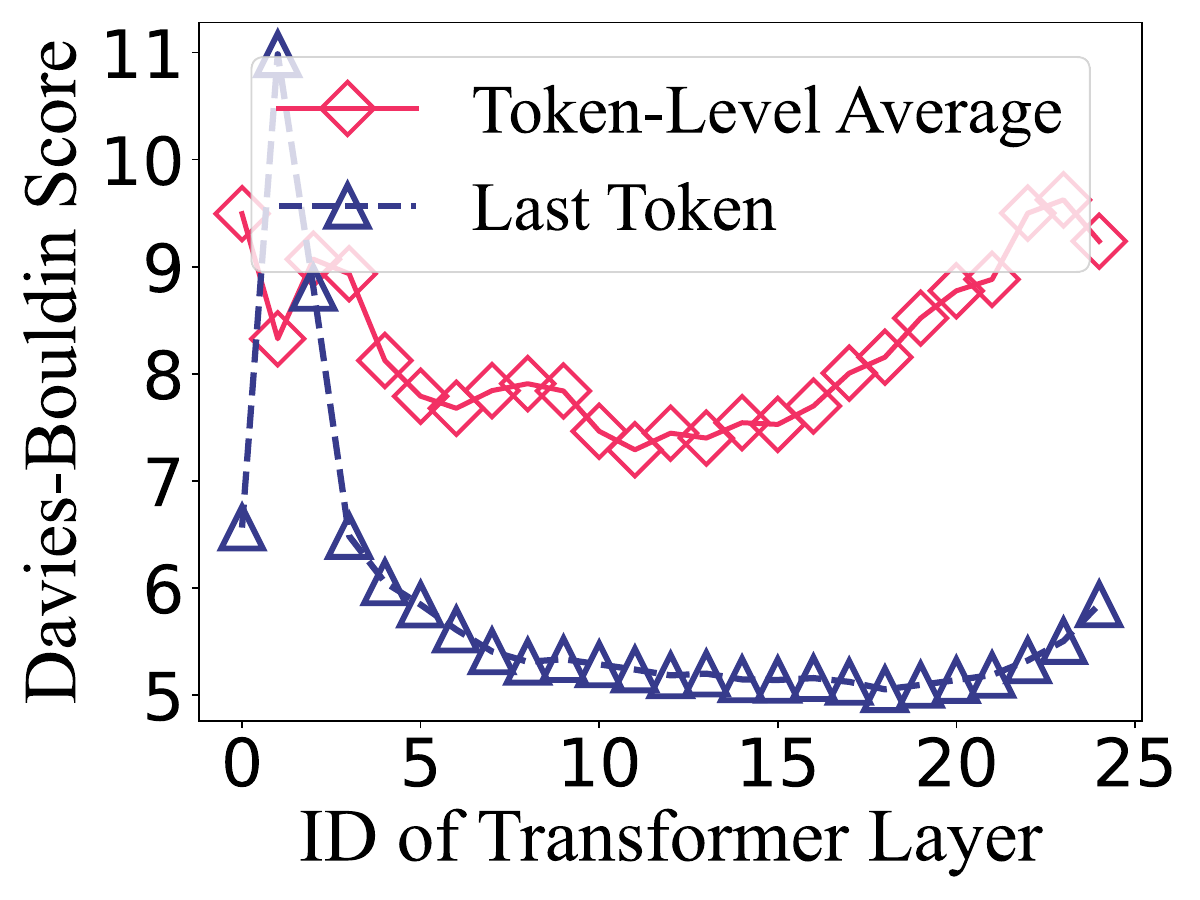}
  }
  \subfigure[SC Index (1.3B) $\uparrow$]{
    \includegraphics[width=0.42\linewidth]{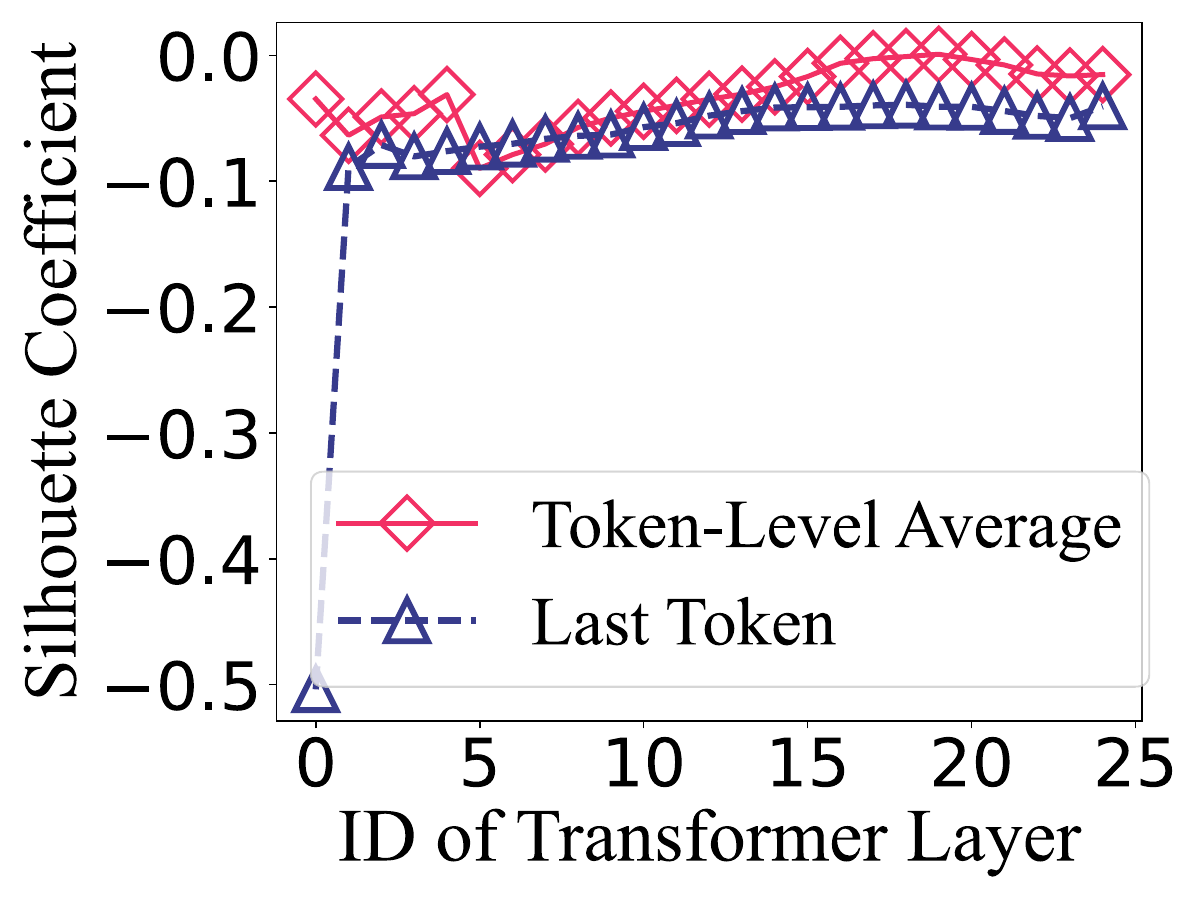}
  }
  \subfigure[F$_{1}$-Score (1.3B) $\uparrow$]{
    \includegraphics[width=0.42\linewidth]{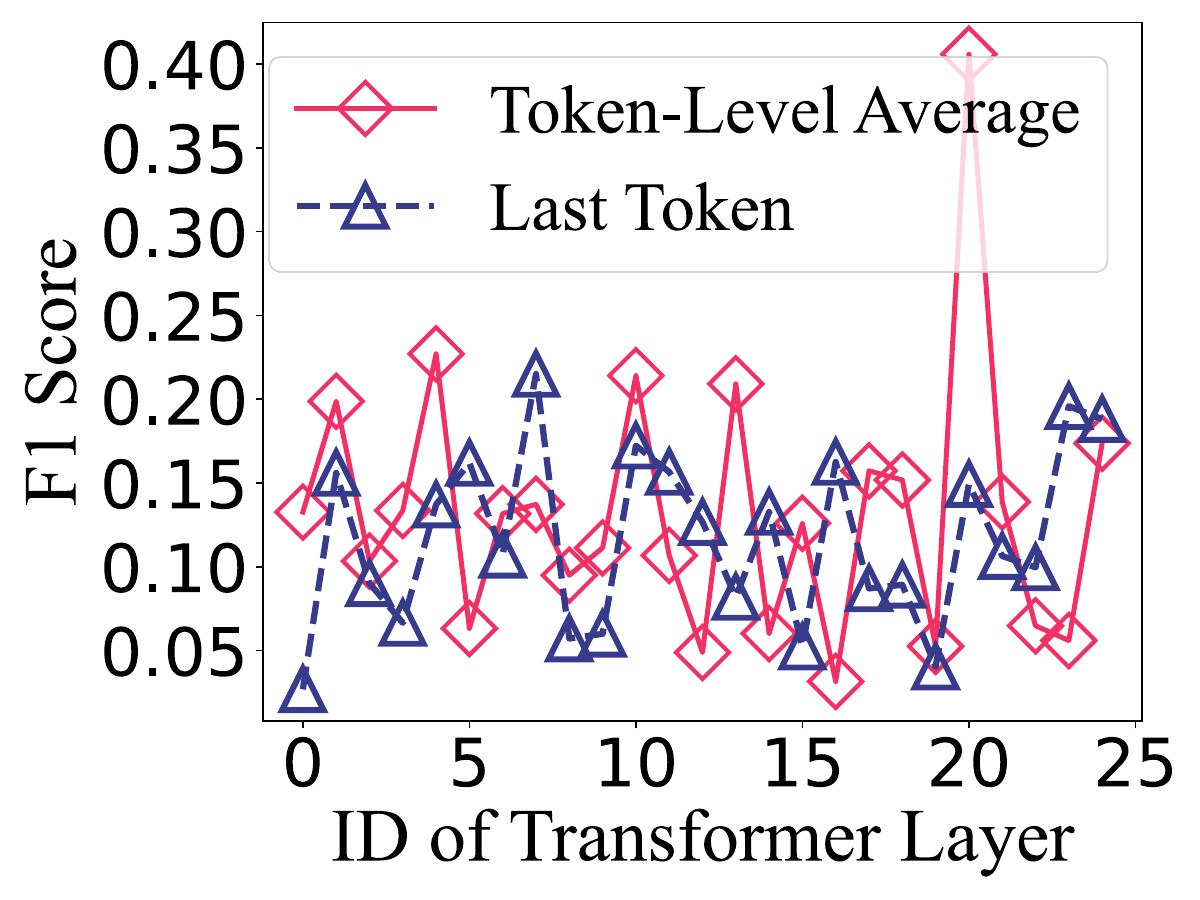}
  }
  \subfigure[DB Score (3B) $\downarrow$]{
    \includegraphics[width=0.42\linewidth]{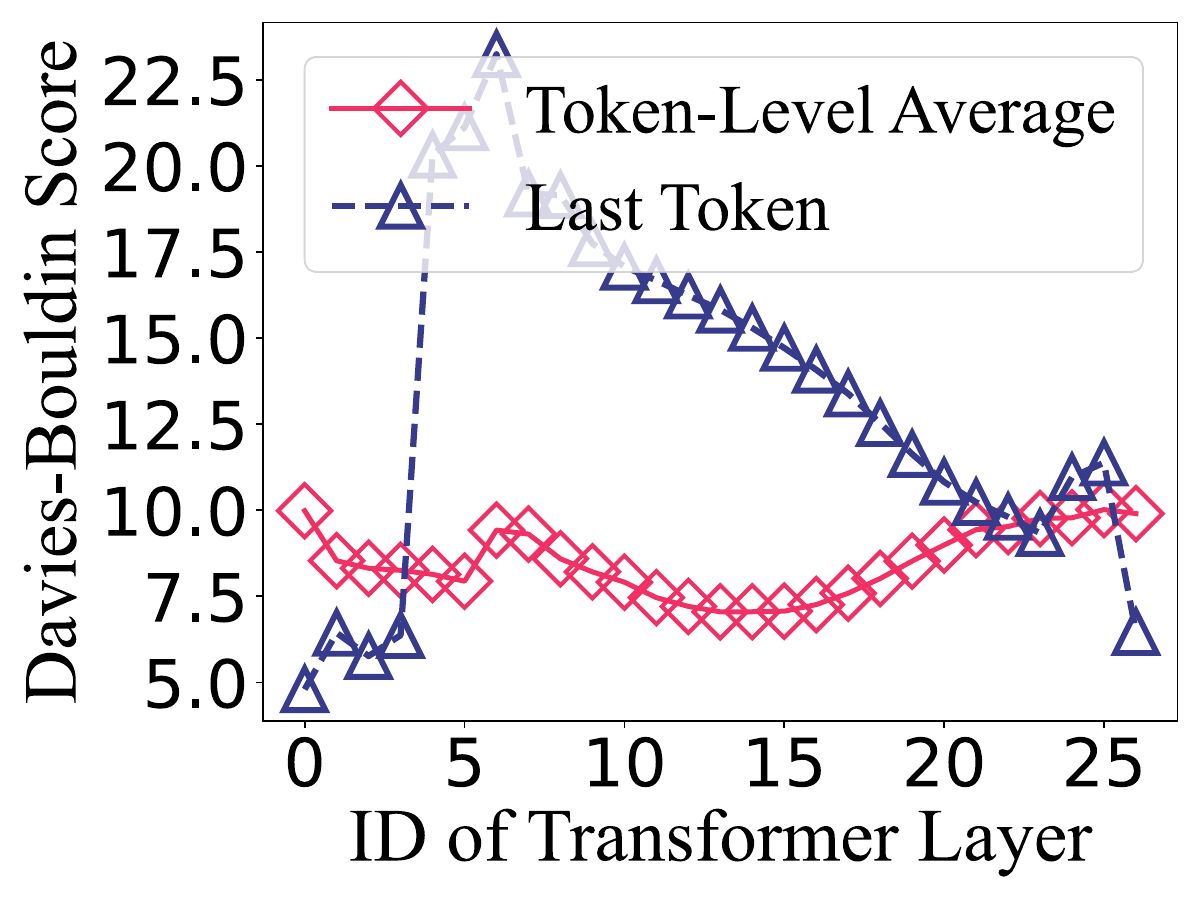}
  }
  \subfigure[SC Index (3B) $\uparrow$]{
    \includegraphics[width=0.42\linewidth]{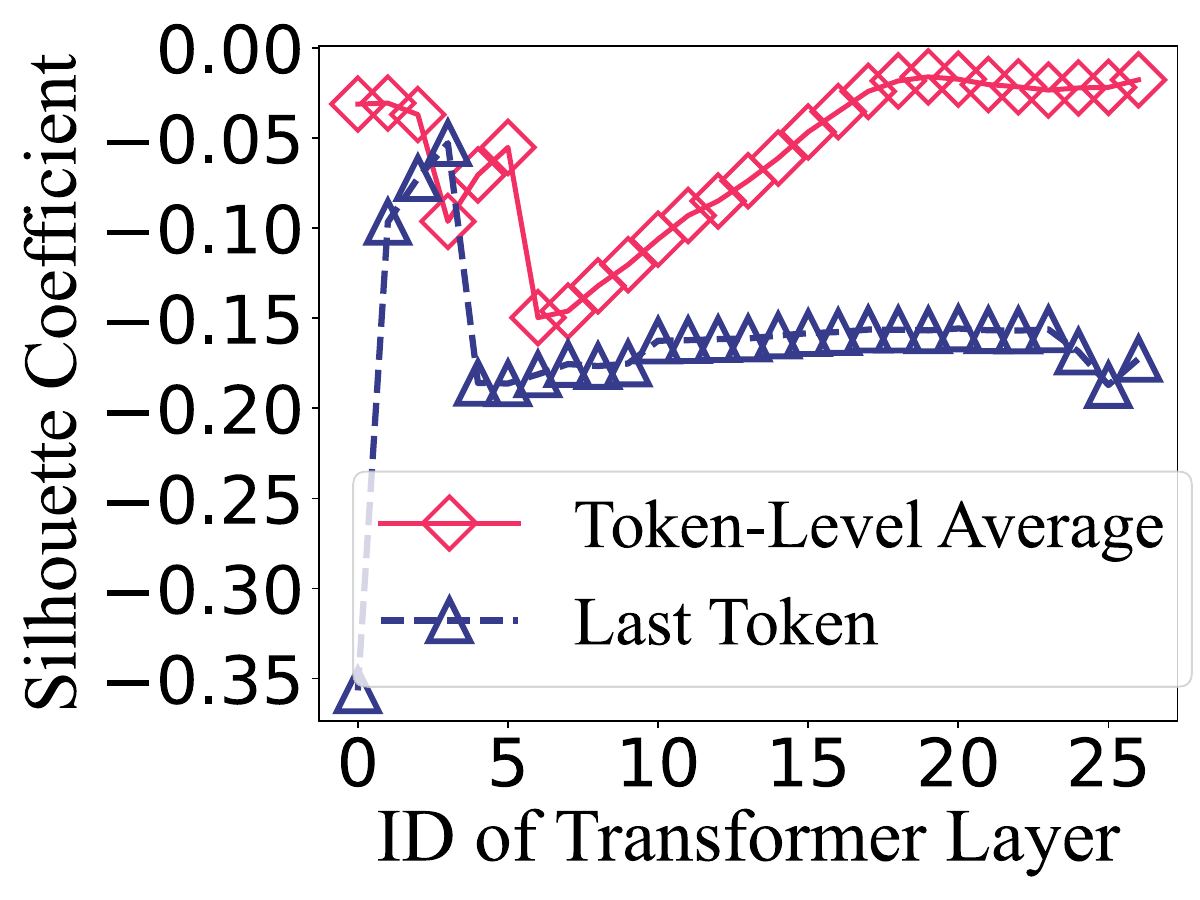}
  }
  \caption{Evaluations of clustered data groups based on features from different Transformer layers, obtained in a centralized scenario with different LLMs on \datadolly.
  }
  \label{fig:appendix:clustering-metrics-layer}
\end{figure}
To better demonstrate the statement that the last layer is not universally optimal, and no single layer excels across all metrics in Section \ref{subsec-app-intra}, we conduct evaluations with more LLMs and more metrics including Silhouette Coefficient \cite{rousseeuw1987silhouettes} and Davies-Bouldin score \cite{4766909}, following the experimental setups aligned to that of Figure \ref{pic-clustering-metrics-layer}.
The additional experimental results in Figure \ref{fig:appendix:clustering-metrics-layer} further demonstrate the above statement.

\subsection{Virtualization of Data Features}
\label{subsec:appendix:virtualization-data-features}
In Section \ref{subsec-app-inter}, we visualize the features obtained from the last Transformer layer and the fused features with \modeldatajuicer on \datadolly as an example, as presented in Figure \ref{pic-visualization-feature}.
In order to illustrate the differences between fused features and the final layer features in more scenarios, we supplemented the visualizations using different LLMs in various scenarios.
As shown in Figure \ref{pic-appendix-visualization-feature-aop}, the fused features create more distinct boundaries among clients. 
Thus, these fused features can be utilized to more effectively distinguish between data samples than existing solutions.
\begin{figure}[t]
  \centering
  \subfigure[Features by the last Transformer layer with \modelllama in \datadolly ($\alpha$=0.5).]{
    \includegraphics[width=0.45\linewidth]{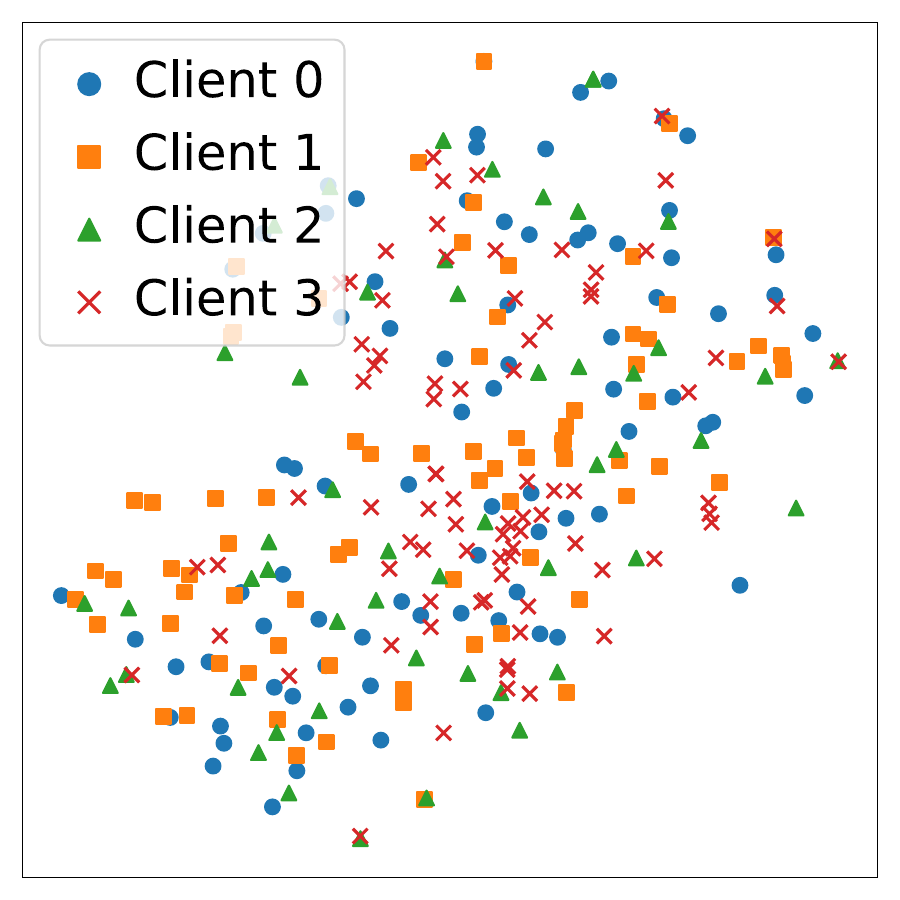}
    \label{subfig-vis-3B-dolly0.5-last}
  }
  \hspace{0.1cm}
  \subfigure[Fused features by \app with \modelllama in \datadolly ($\alpha$=0.5).]{
    \includegraphics[width=0.45\linewidth]{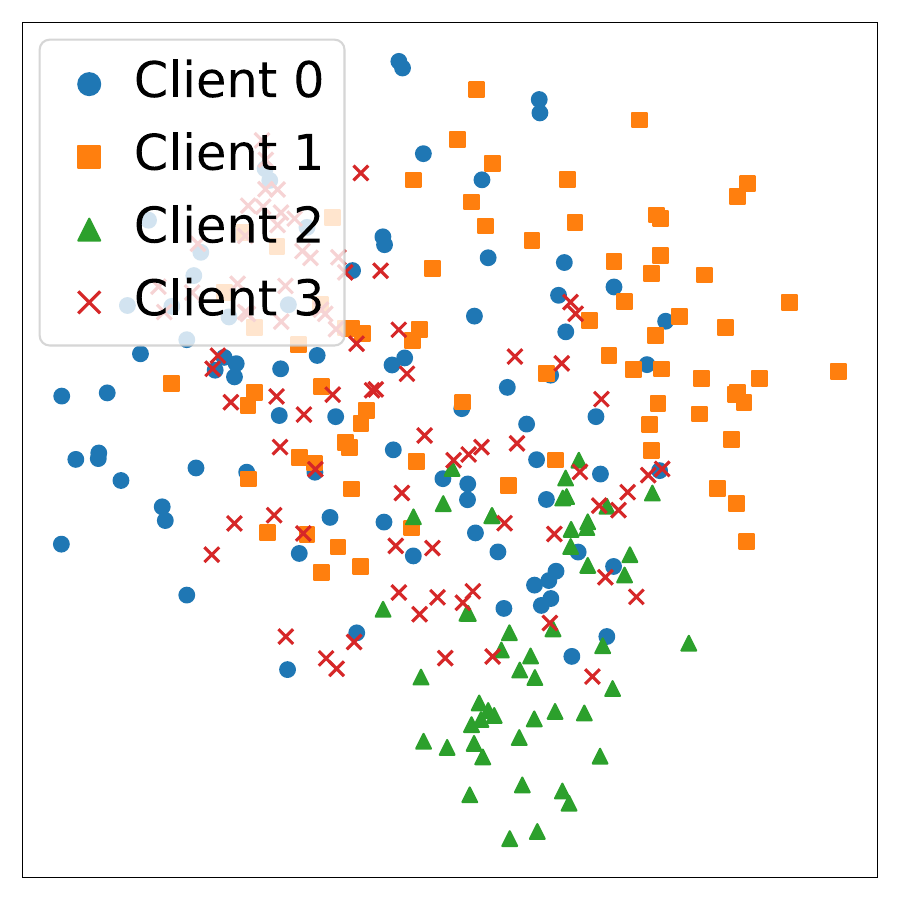}
    \label{subfig-vis-3B-dolly0.5-fusion}
  }
  \subfigure[Features by the last Transformer layer with \modelllama in \datadolly ($\alpha$=5.0).]{
    \includegraphics[width=0.45\linewidth]{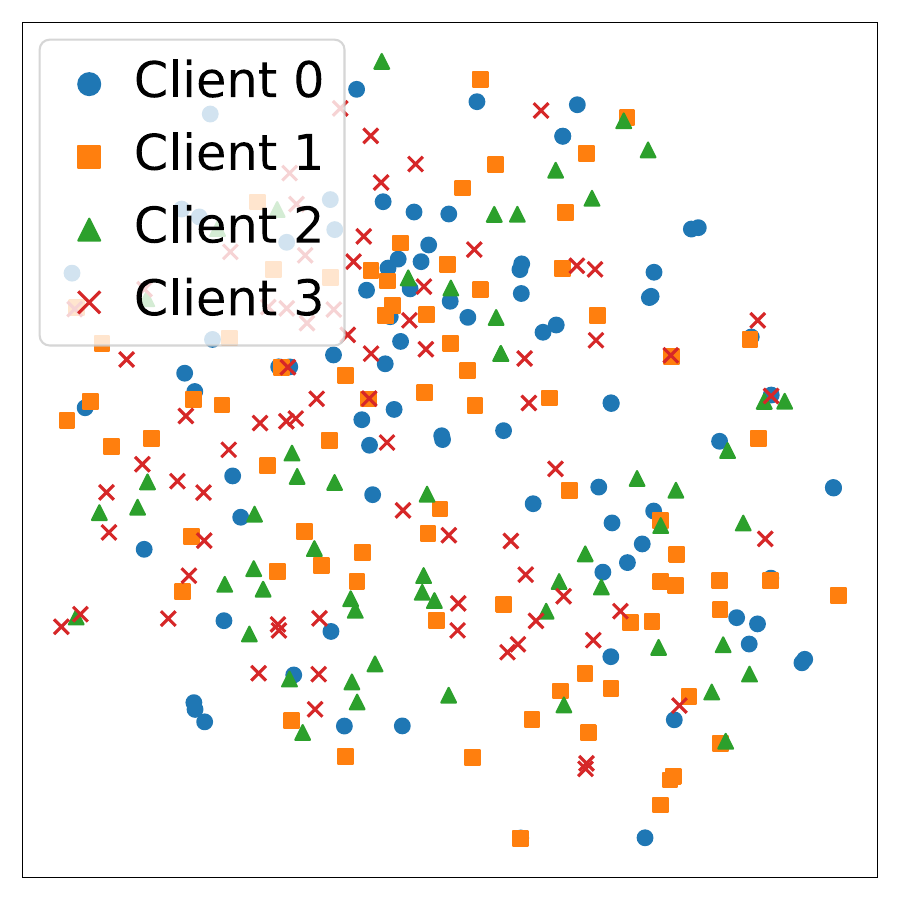}
    \label{subfig-vis-3B-dolly5.0-last}
  }
  \hspace{0.1cm}
  \subfigure[Fused features by \app with \modelllama in \datadolly ($\alpha$=5.0).]{
    \includegraphics[width=0.45\linewidth]{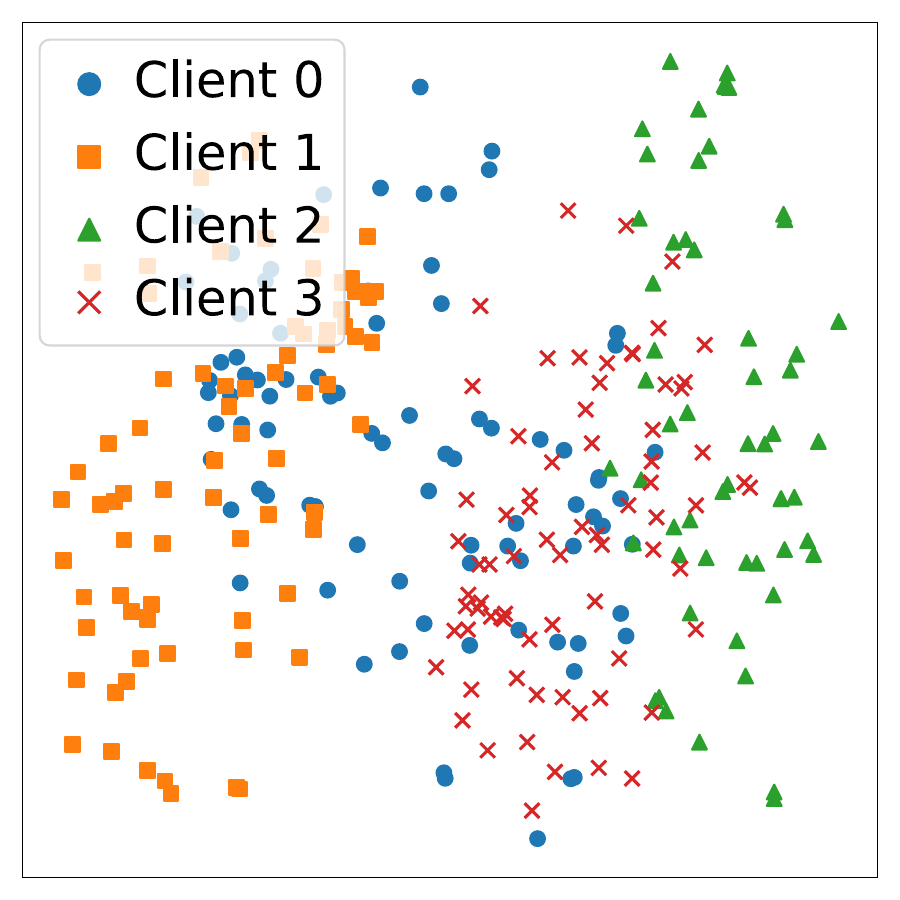}
    \label{subfig-vis-3B-dolly5.0-fusion}
  }
  \subfigure[Features by the last Transformer layer with \modeldatajuicer in \datadolly ($\alpha$=0.5).]{
    \includegraphics[width=0.45\linewidth]{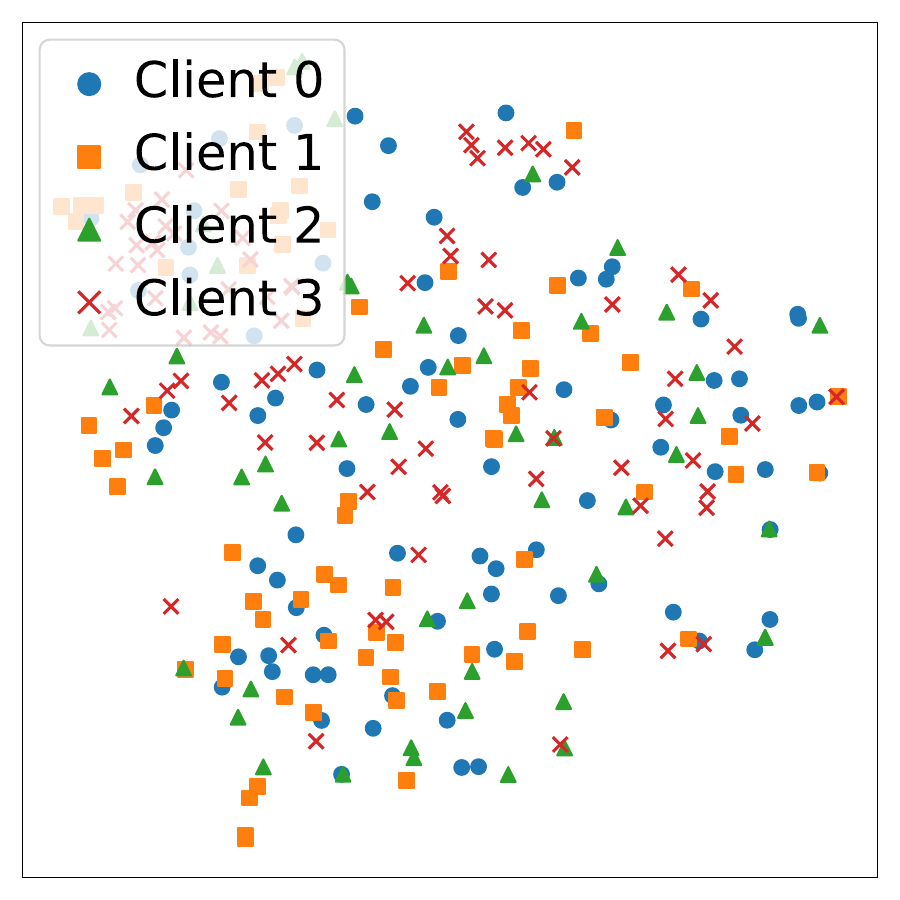}
    \label{subfig-vis-1B-dolly0.5-last}
  }
  \hspace{0.1cm}
  \subfigure[Fused features by \app with \modeldatajuicer in \datadolly ($\alpha$=0.5).]{
    \includegraphics[width=0.45\linewidth]{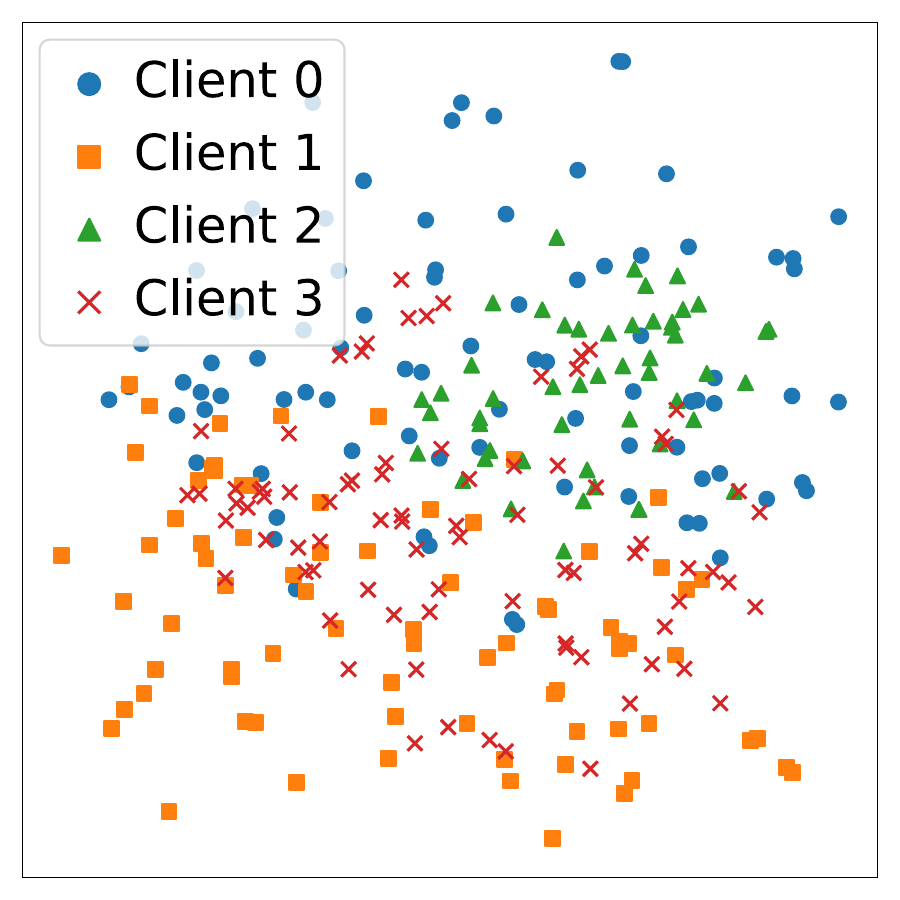}
    \label{subfig-vis-1B-dolly0.5-fusion}
  }
  \caption{Visualization of features from the last Transformer layer and fused features by \app.}
  \label{pic-appendix-visualization-feature-aop}
\end{figure}

\subsection{Performance with Differential Privacy}
\label{subsec:appendix:exp-privacy}
\begin{figure}[t]
\centering
\includegraphics[width=0.6\linewidth]{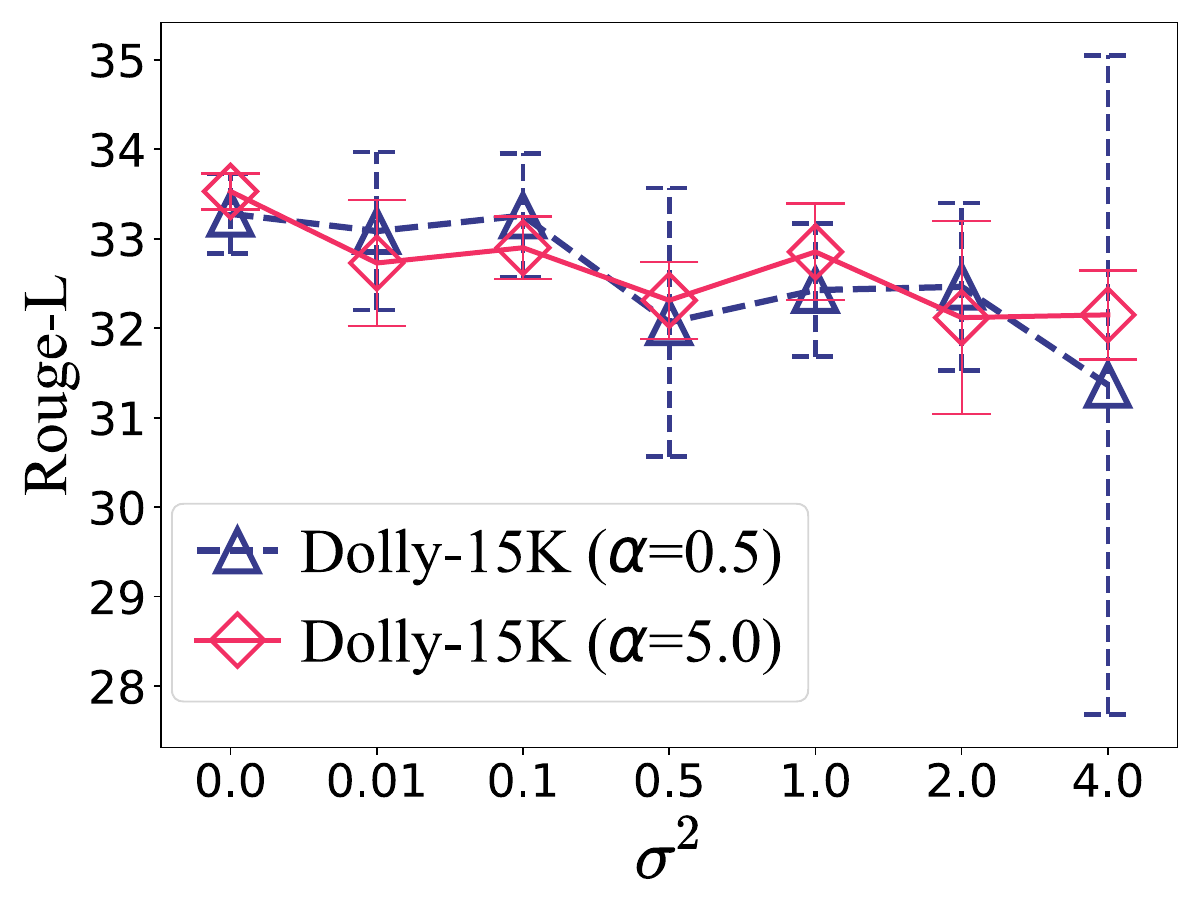}
\caption{Effects of adding DP noise to \apppro (\modeldatajuicer).}
\label{pic-exp-dp}
\end{figure}
Despite that
1) the feature dimensions in our approach are significantly lower compared to the original token-level hidden states, which often have thousands of dimensions, and 
2) the shared centroids do not correspond to real data samples, 
\app still provides additional information compared to the standard FL paradigm \cite{mcmahan2017fl}, such as the spatial distribution of client-side centroids.
As discussed in Section \ref{subsubsec-app-enhancement-privacy}, scaling the elements of the shared centroids to the range [-1,1] and then adding Gaussian noise can enhance the privacy protection of the proposed approaches.

To clarify the impact of adding Gaussian noise, we present example experiments by \apppro on \datadolly with \modeldatajuicer, adjusting the power of the noise by varying its variance.
As presented in Figure \ref{pic-exp-dp}, when the noise variance is no greater than 0.1, \apppro still outperforms Random.
With the noise level increasing further, the performance of \apppro gradually degrades to a level comparable to that of Random as reported in Table \ref{tab-performance}.

\subsection{Breakdown of Time Consumption}
\label{subsec:appendix:exp-time-breakdown}
\begin{table*}[t]
\renewcommand\arraystretch{0.88}
\caption{
Breakdown of time consumption for all steps involving with \app and \apppro \textbf{on \datani}. By default, this table shows the time spent per FL round for each client, with the total time across all clients and rounds in parentheses.
}
\label{tab-time-breakdown-NI}
\centering
\begin{adjustbox}{max width=0.88\textwidth}
\begin{tabular}{l|l|l}
\toprule[1.0pt]
& \multicolumn{1}{c|}{\textbf{\modeldatajuicer on NI}} & \multicolumn{1}{c}{\textbf{\modelllama on NI}} \\
\midrule[1.0pt]
\app       & 
\begin{tabular}[t]{@{}l@{}}
All Steps: 27.5S (11H17M) \\
Training: 0.2S (3M55S) \\
Feature Extraction: 24.8S (10H11M) \\
Feature Fusion: 2.5S (1H1M) \\
Intra-Client Clustering: $6.7 \times 10^{-3}$S (9.9S) \\
Inter-Client Clustering: $4.3 \times 10^{-3}$S (0.2S)
\end{tabular}
&
\begin{tabular}[t]{@{}l@{}}
All Steps: 40.7S (16H42M)\\ 
Training: 6.8S (2H47M)\\ 
Feature Extraction: 30.9S (12H42M)\\ 
Feature Fusion: 3.0S (1H13M)\\ 
Intra-Client Clustering: $6.6 \times 10^{-3}$ S (9.8S)\\ 
Inter-Client Clustering: $3.6 \times 10^{-3}$ S (0.1S)
\end{tabular}
\\
\midrule[0.5pt]
\apppro    &
\begin{tabular}[t]{@{}l@{}}
All Steps: 10.0S (4H7M)\\ 
Training: 1.4S (34M12S)\\
Feature Extraction: 6.6S (2H43M)\\ 
Feature Fusion: 2.0S (49M26S)\\ 
Intra-Client Clustering: $6.7 \times 10^{-3}$ S (9.9S)\\ 
Inter-Client Clustering: $4.3 \times 10^{-3}$ S (0.2S)
\end{tabular}
&
\begin{tabular}[t]{@{}l@{}}
All Steps: 20.1S (8H15M)\\ 
Training: 11.4S (4H40M)\\ 
Feature Extraction: 6.7S (2H46M)\\ 
Feature Fusion: 2.0S (48M20S)\\ 
Intra-Client Clustering: $6.6 \times 10^{-3}$ S (9.7S)\\ 
Inter-Client Clustering: $4.6 \times 10^{-3}$ S (0.2S)
\end{tabular}

\\
\bottomrule[1.0pt]
\end{tabular}
\end{adjustbox}
\end{table*}

\begin{table*}[t]
\caption{
Breakdown of time consumption for all steps involving with \app and \apppro \textbf{on \datadolly}. By default, this table shows the time spent per FL round for each client, with the total time across all clients and rounds in parentheses.
}
\label{tab-time-breakdown-dolly}
\centering
\begin{adjustbox}{max width=0.88\textwidth}
\begin{tabular}{l|l|l}
\toprule[1.0pt]
& \multicolumn{1}{c|}{\textbf{\modeldatajuicer on \datadolly}} & \multicolumn{1}{c}{\textbf{\modelllama on \datadolly}} \\
\midrule[1.0pt]
\app       & 
\begin{tabular}[t]{@{}l@{}}
All Steps: 5.2S (34M52S)\\ 
Training: 2.0S (13M23S)\\ 
Feature Extraction: 3.0S (19M42S)\\ 
Feature Fusion: 0.3S (1M46S)\\ 
Intra-Client Clustering: $1.6 \times 10^{-3}$ S (0.6S)\\ 
Inter-Client Clustering: $8.5 \times 10^{-4}$ S ($3.4 \times 10^{-2}$ S)
\end{tabular}
&
\begin{tabular}[t]{@{}l@{}}
All Steps: 14.6S (2H25M)\\ 
Training: 10.2S (1H41M)\\ 
Feature Extraction: 4.1S (40M37S)\\ 
Feature Fusion: 0.3S (3M21S)\\ 
Intra-Client Clustering: $1.7 \times 10^{-3}$ S (1.0S)\\ 
Inter-Client Clustering: $1.0 \times 10^{-3}$ S ($6.2 \times 10^{-2}$ S)
\end{tabular}
\\
\midrule[0.5pt]
\apppro    & 
\begin{tabular}[t]{@{}l@{}}
All Steps: 4.5S (29M42S)\\ 
Training: 3.6S (23M45S)\\ 
Feature Extraction: 0.7S (4M40S)\\ 
Feature Fusion: 0.2S (1M15S)\\ 
Intra-Client Clustering: $1.6 \times 10^{-3}$ S (0.6S)\\ 
Inter-Client Clustering: $1.0 \times 10^{-3}$ S ($4.0 \times 10^{-2}$ S)
\end{tabular}
&
\begin{tabular}[t]{@{}l@{}}
All Steps: 17.2S (2H52M)\\ 
Training: 16.3S (2H43M)\\ 
Feature Extraction: 0.7S (6M54S)\\ 
Feature Fusion: 0.2S (1M49S)\\ 
Intra-Client Clustering: $1.8 \times 10^{-3}$ S (1.1S)\\ 
Inter-Client Clustering: $1.7 \times 10^{-3}$ S (0.1S)
\end{tabular}
\\
\bottomrule[1.0pt]
\end{tabular}
\end{adjustbox}
\end{table*}
To illustrate the time efficiency of our approach, we provide a detailed breakdown of the time taken by each step, recorded on \datani (Table \ref{tab-time-breakdown-NI}) and \datadolly (Table \ref{tab-time-breakdown-dolly}), respectively.
From these results, we have the following observations:
\begin{itemize}
    \item For \app, the time consumption is usually dominated by feature extraction, due to the relatively high time cost of performing inference with an LLM on all local data samples. 
    \item For \apppro, due to its faster feature extraction than FedHDS, its time consumption is mainly caused by training, feature extraction, and feature fusion. 
\end{itemize}

\subsection{Performance in Various FL Scenarios}
\label{subsec:appendix:exp-performance-scenrios}
\begin{figure}[h]
\centering
\includegraphics[width=0.6\linewidth]{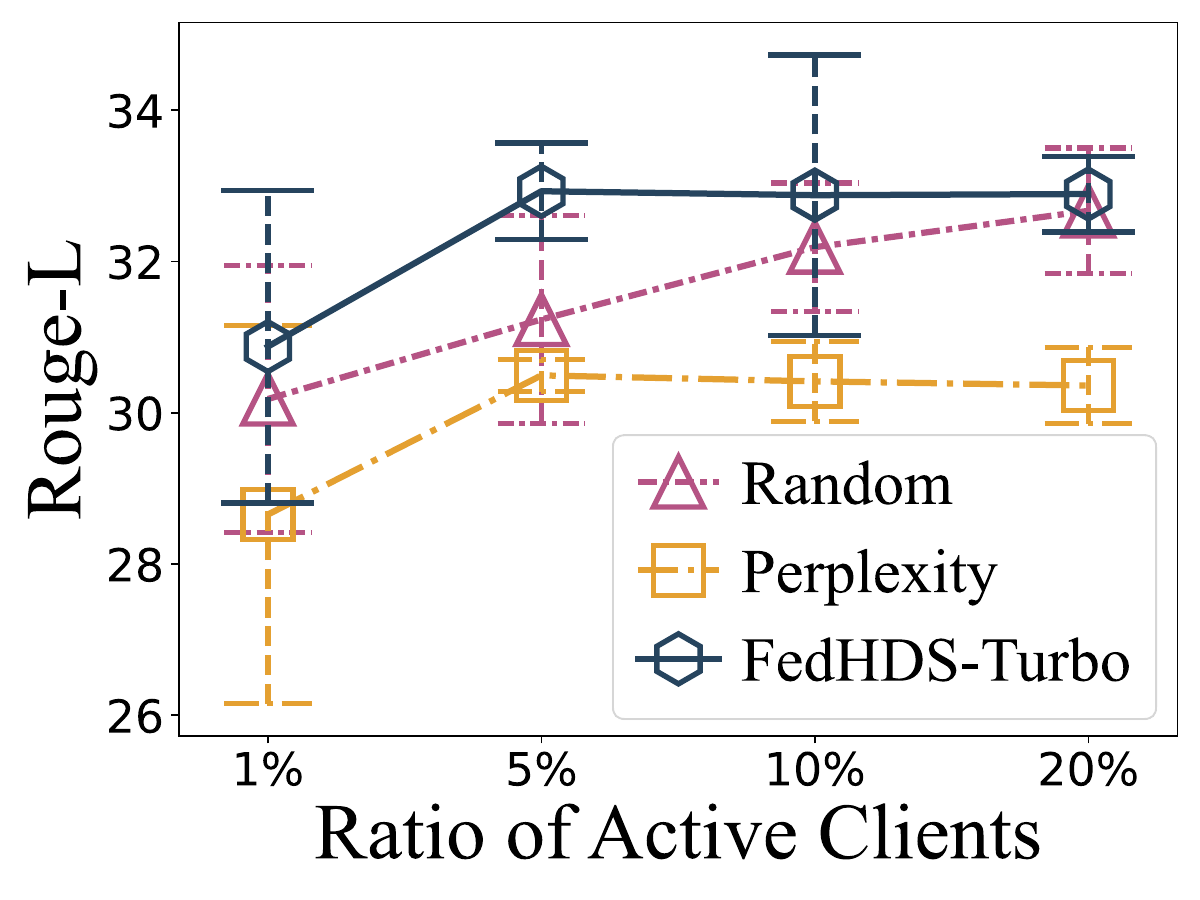}
\caption{Rouge-L with different active ratio on NI with \modelllama.}
\label{pic-exp-active-ratio}
\end{figure}
The ratio of active clients in each round of FL affects the number of centroids sent to the server, potentially affecting the effectiveness of inter-client selection.
We test approaches with coresets in varying proportions of active clients in each round.
As shown in Figure \ref{pic-exp-active-ratio}, \apppro consistently outperforms Random and Perplexity across varying active client ratios. 
When the active client ratio is low (1\%), all approaches perform unsatisfactorily, where the effectiveness of intra-client selection in \apppro may suffer from insufficient centroids. 
With an increasing active client ratio, Random demonstrates a robust growth trend, although it still lags behind \apppro. 
This improvement is likely due to when there are more active clients in each round, the randomly sampled data instances follow a distribution more aligned to the global data distribution.
Considering that clients typically participate in FL with a relatively low active ratio in each round in cross-device FL scenarios \cite{mcmahan2017fl,qin2024full,xu2024forward}, \apppro is more suitable than \app for cross-device settings in terms of both accuracy and efficiency.
\section{Detailed Calculation of Communication Overhead}
\label{sec:appendix:calculation-communication}

Apart from the transmission of model parameters as other federated instruction tuning methods, \app additionally transmits data features of cluster centers and the indices of the selected clusters for data selection. 
Assuming for a client, there are $\rho$ clusters after intra-client selection. The client sends the fused features of the $\rho$ data samples closest to these cluster centers, as: 
\begin{equation}
  [[e_{1,1}, e_{1,2}], [e_{2,1}, e_{2,2}], \ldots, [e_{\rho,1}, e_{\rho,2}]],
\end{equation}
where $e$ denotes a floating number, and $[]$ denotes an array. 
Then, after the inter-client selection, the server returns indices of selected clusters to corresponding clients, which are just a few integers:
\begin{equation}
  [\text{ClusterID}_1, \text{ClusterID}_2, \ldots].
\end{equation}
Therefore, compared to the widely recognized baseline method FedIT, the additional communication overhead of our approach is negligible.

\end{document}